\documentclass[letterpaper,11pt]{article}
\usepackage[utf8]{inputenc}
\usepackage[margin=1in]{geometry}

\usepackage[colorlinks,linkcolor=blue,citecolor=blue]{hyperref}

\usepackage{newfloat,framed}
\usepackage[ruled,vlined]{algorithm2e}
\usepackage[numbers]{natbib}
\usepackage{enumitem}
\usepackage{xspace}


\usepackage{amsmath,amsfonts,amssymb}
\usepackage{mathtools}
\usepackage{amsthm} 

\usepackage{mathabx}
\usepackage{relsize}

\usepackage[capitalise]{cleveref}
\usepackage{xcolor}
\usepackage{dsfont}
\usepackage{bm}


\pdfstringdefDisableCommands{%
    \renewcommand*{\bm}[1]{#1}%
}

\usepackage{sectsty}
\allsectionsfont{\boldmath}

\usepackage{newfloat,framed}

\DeclareFloatingEnvironment[
    fileext=loa,
    listname=List of Algorithms,
    name=Algorithm,
    placement=h,
]{myalgorithm}
\crefname{myalgorithm}{Algorithm}{Algorithms} 

\newcommand{\wrapalgo}[2][0.9\linewidth]
{%
\begin{center}\setlength{\fboxsep}{5pt}\fbox{\begin{minipage}{#1}
#2
\end{minipage}}\end{center}
\vspace{-0.25cm}
}

\newcommand{\ignore}[1]{}


%
%
%

\theoremstyle{plain}
\newtheorem{theorem}{Theorem}
\newtheorem{lemma}[theorem]{Lemma}

\newtheorem*{theorem*}{Theorem}
\newtheorem*{lemma*}{Lemma}
\newtheorem*{corollary*}{Corollary}
\newtheorem*{proposition*}{Proposition}
\newtheorem*{claim*}{Claim}
\newtheorem*{fact*}{Fact}
\newtheorem*{observation*}{Observation}

\theoremstyle{definition}

\newtheorem*{definition*}{Definition}
\newtheorem*{remark*}{Remark}
\newtheorem*{example*}{Example}

 \theoremstyle{plain}
\newtheorem*{theoremaux}{\theoremauxref}
\gdef\theoremauxref{1}

%

 \newcommand{\secref}[1]{Section~\ref{#1}}

 \newcommand{\thmref}[1]{Theorem~\ref{#1}}



\DeclareMathAlphabet{\mathbfsf}{\encodingdefault}{\sfdefault}{bx}{n}


\let\Pr\relax
\DeclareMathOperator{\Pr}{\mathbb{P}}

\newcommand{\mycases}[4]{{
\left\{
\begin{array}{ll}
    {#1} & {\;\text{#2}} \\[1ex]
    {#3} & {\;\text{#4}}
\end{array}
\right. }}

\newcommand{\lr}[1]{\left(#1\right)}
\newcommand{\lrbig}[1]{\big(#1\big)}

\newcommand{\lrBigg}[1]{\Bigg(#1\Bigg)}
\newcommand{\lrbra}[1]{\left[#1\right]}

\newcommand{\lrset}[1]{\left\{#1\right\}}

\newcommand{\set}[1]{\{#1\}}
\newcommand{\abs}[1]{|#1|}

\newcommand{\distance}{\Delta}
\newcommand{\tdistance}{\Delta_{\mathcal{T}}}

\newcommand{\wt}[1]{\smash{\widetilde{#1}}}

\renewcommand{\O}{O}

\newcommand{\tO}{\wt{\O}}
\newcommand{\tTheta}{\wt{\Theta}}
\newcommand{\E}{\mathbb{E}}
\newcommand{\EE}[2][]{\E_{#1}\mkern-4mu\lrbra{#2}}

\newcommand{\ind}[1]{\mathds{1}\mkern-2mu\set{#1}}

\newcommand{\st}{\star}

\newcommand{\reals}{\mathbb{R}}

\newcommand{\sig}{\sigma}
\newcommand{\del}{\delta}

\newcommand{\TT}{\overline{T}}

\let\oldtfrac\tfrac
\renewcommand{\tfrac}[2]{\smash{\oldtfrac{#1}{#2}}}

\let\nablaold\nabla
\renewcommand{\nabla}{\nablaold\mkern-2mu}

\newcommand{\bb}{\mathbf{b}}
\newcommand{\bbb}{\overline{\bb}}
\newcommand{\taumax}{{\overline{\tau}}}

\newcommand{\val}{v}

\newcommand{\regret}{\textrm{Regret}}

\newcommand{\mregret}{\textrm{Regret}_\mathsf{MC}}

\newcommand{\buy}[2]{#1(#2)}

\newcommand{\p}{\rho}
\newcommand{\klm}{\textbf{EXP3MV}}

\def\expth{\ensuremath{\textsc{Exp3}}\xspace}
\def\klm{\ensuremath{\textsc{SMB}}\xspace}


\newcommand{\tell}{\wt{\ell}}

\newcommand{\bell}{\bm\bar{\ell}}
\newcommand{\cA}{\mathcal{A}}
\newcommand{\cT}{\mathcal{T}}

\newcommand{\LCA}{\mathsf{lca}}
\newcommand{\level}{\mathsf{level}}

\title{Bandits with Movement Costs and Adaptive Pricing}


\author{%
Tomer Koren\\
Google\\
\texttt{tkoren@google.com}
\and Roi Livni\\
Princeton University\\
\texttt{rlivni@cs.princeton.edu}\\
\and Yishay Mansour\\
Tel Aviv University\\
\texttt{mansour@tau.ac.il}%
}

\begin{document}

\maketitle
\thispagestyle{empty}

\begin{abstract}
We extend the model of Multi-armed Bandit with unit switching cost to
incorporate a metric between the actions. 
We consider the case where the metric
over the actions can be modeled by a complete binary tree, and the distance between
two leaves is the size of the subtree of their least common
ancestor, which abstracts the case that the actions are points on the
continuous interval $[0,1]$ and the switching cost is their
distance. 
In this setting, we give a new algorithm that establishes a regret of $\tO(\sqrt{kT} + T/k)$, where $k$ is the number of actions and $T$ is the time horizon.
When the set of actions corresponds to whole $[0,1]$ interval we can exploit our method for the task of bandit learning with Lipschitz loss functions, where our algorithm achieves an optimal regret rate of $\tTheta(T^{2/3})$, which is the same rate one obtains when there is no penalty for movements.

As our main application, we use our new algorithm to solve an adaptive pricing problem.
Specifically, we consider the case of a single seller faced with a stream of patient buyers. Each buyer has a private value and a window of time
in which they are interested in buying, and they buy at the lowest price in the window, if it is below their value.
We show that with an appropriate discretization of the prices, the seller can achieve a regret of $\tO(T^{2/3})$ compared to the best fixed price in hindsight, which outperform the previous regret bound of $\tO(T^{3/4})$ for the problem.
\end{abstract}

\clearpage
\setcounter{page}{1}


\section{Introduction}

Multi-Armed Bandit (MAB) is a well studied model in computational
learning theory and operations research. In MAB a learner repeatedly
selects actions and observes their rewards. The goal of the learner
is to minimize the regret, which is the difference between her loss
and the loss of the best action in hindsight. This simple model
already abstracts beautifully the {\em exploration-exploitation}
tradeoff, and allows for a systematic study of this important issue
in decision making. The basic results for MAB show that even when an
adversary selects the sequence of losses, the learner can guarantee
a regret of $\Theta(\sqrt{kT})$, where $k$ is the number of actions
and $T$ is the number of time steps
(\citealp{auer2002nonstochastic,audibert2009minimax}; see also
\citealp{BubeckC12}).

The simplicity of the MAB comes at a price.
Essentially, the system is stateless, and previous actions have no influence on the losses assigned to actions in the future.
A more involved model of sequential decision making is Markov Decision Processes
(MDPs) where the environment is modeled by a finite set of states, and
actions are not only associated with losses but also with stochastic transitions between states.
Unfortunately, for the adversarial setting there are mostly hardness
results even in limited cases~\citep{AbbasiBKSS13}.

Introducing switching costs is a step of incorporating dependencies
in the learner's action selection. The unit switching cost has a
unit cost per each changing of actions. In such a setting a tight
bound of $\tTheta(k^{1/3}T^{2/3})$ is known \citep{DekelDKP14}. Our
main goal is to extend this basic model to the case of MAB with
movement costs, where the cost associated with
switching between arms is given by a metric that determines the distance between any pair of arms. Such a model already introduces a
very interesting dependency in the action selection process for the
learner. Specifically, we study a metric between actions which is
modeled by a complete binary tree, where the distance between two
actions is proportional to the number of nodes in the subtree of
their least common ancestor. This abstracts the case where the arms are associated with $k$
points on the real line and the switching cost between arms is the absolute difference between the corresponding points (actually,
the tree metric only upper bounds distances on a line, but this
upper bound is sufficient for our applications). Note that we do not
assume that pairs of actions with low movement cost have similar losses: our model retains the full generality of the loss functions, and only imposes a metric structure on the cost of movement between arms.

Our main result is an efficient MAB algorithm, called the Slowly Moving Bandit (\klm) algorithm, that guarantees expected regret of at most $\tO(\sqrt{kT}+T/k)$. As we elaborate
later, this result implies that for $k \le T^{1/3}$ we can
achieve an optimal regret $\tTheta(T^{2/3})$, and for $k\ge T^{1/3}$
we obtain an optimal regret rate of $\tTheta(\sqrt{kT})$.
It is worth discussing the implication of our bound. The bound of $\tTheta(T^{2/3})$ for $k \le T^{1/3}$ is tight due to the lower bound
of \citet{DekelDKP14}, which applies already for $k=2$ actions. The bound of $\tTheta(\sqrt{kT})$ for $k \ge T^{1/3}$ is tight due to the classic lower bound for MAB even without movement costs \citep{auer2002nonstochastic}. Surprising, for a large action set (i.e., $k \ge T^{1/3}$) we lose nothing in the regret by introducing movement costs to the problem!
Another surprising consequence of our bound is that there is no loss in the regret by increasing the number of actions from $k=2$ to $k = \Theta(T^{1/3})$ when movement costs are present.


The main application of our \klm algorithm is for adaptive pricing with patient buyers~\citep{FeldmanKLMZ16}. In this adaptive pricing problem, we have
a seller which would like to maximize his revenue. He is faced with
a stream of patient buyers. Each buyer has a private value and a
window of time in which she would like to purchase the item. The
buyer buys at the lowest price in its window, in case it is below
its value. (The seller publishes sufficient prices into the future,
such that the buyer can observe all the relevant prices.)
The adaptive price setting is related to the MAB problem with
movement costs in the following way. The prices are continuous (say,
$[0,1]$) and the reward is the revenue gain by the seller. The
rewards are given by a one--sided Lipschitz function (specifically,
we receive the reward whenever we post a price which is at most the
private value, and zero otherwise). This allows us to apply our
bandit algorithm via discretization of the continuous space. The
challenge, though, remains to control the cost the seller pays which
stems from the buyer's patience.

The seller benchmark is the best single price. Using a single price
implies that the buyers either buy immediately, or never buy. The
movement cost models the loss due to having the buyer patient, which
can be thought as the difference between the price of the item when
the buyer arrives and the price at which it buys. (Note that there
might be a gain, since it might be that when the buyer arrives the
price is too high, but later lower prices make him buy. We ignore
this effect for now.) Our main result is that the seller can use our
\klm algorithm and guarantee a regret of at most $\tO(T^{2/3})$,
using $T^{1/3}$ equally-spaced prices. This is in contrast to a
regret of $\tO(T^{3/4})$ which is achieved by applying a standard switching cost technique together with a discretization argument \citep{FeldmanKLMZ16}.

It is interesting to observe qualitatively how our algorithm
performs. It is much more likely to make small changes than large
ones; roughly speaking, the probability of a change drops
exponentially in the magnitude of the change. Conceptually, this is
a highly desirable property of a pricing algorithm, and arguably, of
any regret minimization algorithm: we would like to slightly perturb
the prices over time without a sever impact on the buyers, and only
rarely make very large changes in the pricing.

Finally, another application of our algorithm is for the case that
we have continuous actions on an interval, and the losses of the
actions are Lipschitz. Our algorithm can handle movement cost which
are also Lipschitz on the interval. (We stress that in our
application the losses are deterministic and not stochastic.)

\subsection{Related Work}

With a uniform unit switching cost (i.e., when switching between any two actions has
a unit cost), it is known that there is a tight $\wt{\Omega}(k^{1/3}
T^{2/3})$ lower bound for the MAB problem \citep{DekelDKP14}, which is in contrast to the $\O(\sqrt{kT})$ regret upper bound without  switching costs.

Classical MAB algorithms such as \expth
\citep{auer2002nonstochastic} guarantee a regret of $\tO(\sqrt{kT})$
without movement costs. However, they are not guaranteed to move
slowly between actions, and in fact, it is known that \expth might make $\wt{\Omega}(T)$ switches between actions in the worst
case (see \citealp{DekelDKP14}), which makes
it inappropriate to directly handle movement costs.

Our adaptive pricing application follows the model of
\citet{FeldmanKLMZ16}. There, for a finite set of $k$ prices show a
matching bound of $\tTheta(T^{2/3})$ on the regret. For continuous
prices they remark that their upper bound can be used to derive an
$\tO(T^{3/4})$ regret bound. Our SMB algorithm improves this
regret bound to $\tO(T^{2/3})$. There is a slight difference in the
exact feedback model between \cite{FeldmanKLMZ16} and here: in
both models when a buyer arrives, the sell time is uniquely
determined; however, in \cite{FeldmanKLMZ16} the seller observes the
purchase only at the actual time of the sell, whereas here we assume the
seller observes the sell when the buyer arrives and decides when to
purchase. We remark, though, that as discussed in \cite{FeldmanKLMZ16}
all lower bounds  derived there apply to the current feedback model too.

There is a vast literature on online pricing
(e.g., \citealp{balcan2006approximation,balcan2008item,balcan2010sequential,bansal2010dynamic,besbes2009dynamic}).
The main difference of our adaptive pricing model is the patience of our buyers, which correlates between the prices at nearby time steps.

For the case of continuous prices and a single seller, when one consider \emph{impatient} buyers, a simple
discretization argument can be used to achieve a regret of $\tO(T^{2/3})$, and there exists
a similar lower bound of $\Omega(T^{2/3})$ \citep{KleinbergL03}.
More generally, learning Lipschitz functions on a closed interval
has been studied by \citet{Kleinberg04}, where an optimal
$\tTheta(T^{2/3})$ regret bound is shown via discretization. Our
results show that even if one adds a movement cost (which is the
distance) to the problem, there is no change in the regret.

There are many works on continuous action MAB
\citep{Kleinberg04,cope2009regret,auer2007improved,bubeck2011x,yu2011unimodal}.
Most of the works relate the change in the payoff to the change in
the action in various ways. Specifically, there is an extensive
literature on the Lipschitz MAB problem and various variants thereof
\citep{kleinberg2008multi,slivkins2011multi,slivkins2013ranked,kleinberg2010sharp},
where the expectation of the reward of arms have a Lipschitz
property.
%
We differ from that line of work. Our assumption is about the switching cost (rather than the losses) being related to the distance between the actions.

The work of \citet{guha2009multi} discusses a stochastic MAB, in the spirit of the Gittins index, where there is
both a switching cost and a play cost, and gives a constant
approximation algorithm. We differ from that work both in the model,
their model is stochastic and our is adversarial, and in the result, their is
a multiplicative approximation and our is a regret.

Approximating an arbitrary metric using randomized trees (i.e., $k$-HST) has a long
history in the online algorithms literature, starting with the work of \citet{Bartal96}. The
main goal is to derive a simpler metric representation (using
randomized trees) that will both upper and lower bound the given
metric. In this work we need only an upper bound on the metric, and
therefore we can use a deterministic complete binary tree.


\section{Setup and Formal Statement of Results}

\subsection{Bandits with Movement Costs}
\label{sec:setup-bandits}

In this section we consider the Multi-Armed Bandit (MAB) problem
with movement costs. In this problem, that can be described as a
game between an online learner and an adversary continuing for $T$
rounds, where there is a set $K=\{1, \ldots, k\}$ of $k \ge 2$ arms
(or actions) that the learner can choose from. The set of arms is
equipped with a metric $\Delta(i,j) \in[0,1]$ that determines the
movement distance between any pair of arms $i,j \in K$.

First, before the game begins, the adversary fixes a sequence
$\ell_1,\ldots,\ell_T \in [0,1]^k$ of loss vectors assigning loss
values in $[0,1]$ to the arms.\footnote{Throughout, we assume that the adversary is
\emph{oblivious}, namely, that it cannot react to the learner's
actions.} Then, on each round $t=1,\ldots,T$, the learner picks an
arm $i_t \in K$, possibly at random, and suffer the associated loss
$\ell_t(i_t)$. In addition to incurring this loss, the learner also
pays a cost of $\distance(i_t,i_{t-1})$ that results from her movement
from arm $i_{t-1}$ to arm $i_t$. At the end of each round $t$, the
learner receives \emph{bandit feedback}: she gets to observe the
single number $\ell_t(i_t)$, and this number only. (The movement
cost is common knowledge.)

The goal of the learner, over the course of $T$ rounds of the game,
is to minimize her expected movement-regret, which is defined as the
difference between her (expected) total costs---including both the
losses she has incurred as well as her movement costs---and the
total costs of the best fixed action in hindsight (that incur no
movement costs, since it is the same action in all time steps);
namely, the \emph{movement regret} with respect to a sequence $\ell_{1:T}$ of loss vectors and the metric $\Delta$ equals
\begin{align*}
\mregret(\ell_{1:T},\Delta)
=
\EE{ \sum_{t=1}^T \ell_t(i_t) + \sum_{t=2}^T
\distance(i_t,i_{t-1})} - \min_{i^\st \in K} \sum_{t=1}^T \ell_t(i^\st)
~.
\end{align*}
Here, the expectation is taken with respect to the player's
randomization in choosing the actions $i_1,\ldots,i_T$.


\paragraph{MAB with a tree metric.}

Our focus in this paper is on a metric induced over the actions by a complete binary tree $\cT$ with $k$ leaves.
We consider the MAB setting where each action $i$ is associated with a leaf of the tree $\cT$.
(For simplicity, we assume that $k$ is a power of two.)

We number the levels of the tree $\cT$ from the leaves to
the root. Let $\level(v)$ be the level of node $v$ in $\cT$, where the level of the leaves is $0$.
Given two leaves $i$ and $j$, let $\LCA(i,j)$ be their least common ancestor in $\cT$.
Then, given actions
$i$ and $j$ let $d_\cT(i,j)$ be the level of their least
common ancestor in $\cT$, i.e., $d_\cT(i,j) = \level(\LCA(i,j))$.
The movement cost between $i$ and $j$ is then
%
\begin{align}\label{eq:Delta}
\tdistance(i,j) = \tfrac{1}{k} 2^{d_{\cT}(i,j)} \in [0,1]
~.
\end{align}
Our first main result bounds that movement cost with respect to the given metric:
\begin{theorem} \label{thm:MAB}
There exists an algorithm (see \cref{alg:alg1} in \cref{sec:mab})  that for any sequence of loss functions $\ell_1,\ldots,\ell_T$ guarantees that
\[
\mregret(\ell_{1:T},\tdistance) = \tO\left(\sqrt{kT} + \frac{T}{k}\right)
~.
\]
\end{theorem}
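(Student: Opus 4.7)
My plan is to construct an exponential-weights style algorithm that exploits the hierarchical structure of the tree metric, so that switches at deeper levels of $\cT$ (which are correspondingly more expensive) occur with exponentially smaller probability. Concretely, I would maintain a distribution $p_t$ over the leaves that factors along the tree: at every internal node we store a weight for each of its two children, and a leaf is sampled top-down by descending through the tree and at each internal node choosing a child with probability proportional to its weight. The weights are updated multiplicatively using the standard importance-weighted bandit estimator $\hat\ell_t(i) = \ell_t(i_t)\,\ind{i=i_t}/p_t(i)$, which is unbiased under bandit feedback.

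To force the algorithm to move slowly, I would couple the randomness across time steps. At each round $t$ I draw a random refresh depth $L_t$ with $\Pr[L_t \ge \ell] \le 2^{-\ell}$, and re-randomize the top-down choices only at levels $\le L_t$, reusing the higher-level choices from step $t-1$. Provided the marginal of $i_t$ under this coupling equals $p_t$, the standard \expth analysis carries through and yields a loss-regret of $\tO(\sqrt{kT})$. The movement cost is then controlled level by level: a switch at exactly level $\ell$ of the tree requires $L_t \ge \ell$, which occurs with probability at most $2^{-\ell}$, while each such switch costs $2^\ell/k$ by the definition of $\tdistance$. Summing,
\[
\EE{\sum_{t=2}^T \tdistance(i_t,i_{t-1})}
\leq \sum_{t=2}^T \sum_{\ell=1}^{\log k} 2^{-\ell}\cdot \frac{2^\ell}{k}
\eq O\!\left(\frac{T\log k}{k}\right)
\eq \tO(T/k),
\]
which combined with the loss regret gives the claimed $\tO(\sqrt{kT}+T/k)$ bound.

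The main obstacle will be ensuring that the coupling above truly preserves the marginal $p_t$, because the internal-node weights change from round $t-1$ to round $t$ due to the update on $\hat\ell_{t-1}$, so naively reusing a stale high-level choice would bias the sampling distribution and break the unbiasedness of $\hat\ell_t$. I would handle this by arranging the update schedule so that an internal node at level $\ell$ is updated only when $L_t \ge \ell$, effectively running a separate exponential-weights process at each level of $\cT$ on its own refresh epochs. This preserves the marginal exactly, but it means a level-$\ell$ instance only sees roughly $T/2^\ell$ updates, so its per-step regret inflates; balancing the learning rate at each level against the variance of the importance-weighted estimates at that level is where the log factors in the $\tO$ come from.

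The proof would then come in three parts: (i) verifying by induction that the coupled sampling procedure preserves the marginal $p_t$ at every round, so that the estimators are unbiased and the level-$\ell$ loss-regret analysis reduces to a textbook \expth bound on a suitably defined aggregated loss; (ii) summing the regrets across the $\log k$ levels of $\cT$ to get the $\tO(\sqrt{kT})$ loss term; and (iii) the movement-cost calculation above. Tuning the step size(s) to balance these contributions delivers \cref{thm:MAB}.
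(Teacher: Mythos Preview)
Your high-level picture—couple the resampling to a random refresh depth with $\Pr[L_t\ge\ell]\le 2^{-\ell}$ and bound the movement cost level by level—is exactly right, and the movement-cost calculation is correct. The gap is on the loss-regret side, where you invoke two incompatible mechanisms. First you say that ``provided the marginal of $i_t$ under this coupling equals $p_t$, the standard \expth analysis carries through''; but your fix for the marginal is to freeze the high-level weights between refresh epochs, which means the joint update on $p_t$ is no longer a single multiplicative update with an unbiased loss estimator, so the standard \expth bound does not apply as stated. Your fallback—``running a separate exponential-weights process at each level'' and ``summing the regrets across the $\log k$ levels''—is a genuinely different algorithm, and its analysis is not the textbook one: the loss seen by the bandit at a level-$\ell$ node is determined by the random choices of the lower-level bandits beneath it, so those per-node instances face an adaptive adversary, and telescoping their regrets into a global regret against a fixed leaf $i^*$ is nontrivial (the node containing $i^*$ at level $\ell$ only receives feedback on rounds when the higher levels happen to route there). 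You have not explained how this decomposition is carried out or why it sums to $\tO(\sqrt{kT})$.

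The paper takes a different route that sidesteps both difficulties. It runs a \emph{single} \expth over all $k$ leaves, but replaces the usual importance-weighted estimator by one augmented with additive ``balancing'' terms $\bell_{t,d}$, one per level, constructed so that the multiplicative update leaves the marginals unchanged, $p_{t+1}(A)=p_t(A)$ for every subtree $A$ at level $\ge d_t$ (\cref{lem:movement}). This is what makes the lazy sampling legitimate: \cref{lem:sampling} shows the identity $\E[\ind{i_t\in A}/p_t(A)]=1$ survives even though $i_t$ is drawn from a conditional of $p_t$. The cost is that the balancing terms can be large and negative, and the substance of the proof is a variance bound (\cref{lem:variance}) showing that $\E[p_t\cdot\tell_t^2]=O(k\log k)$ despite them; this step has no analogue in your sketch and is where the real work lies. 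Once that bound is in hand, a single second-order MW inequality gives the $\tO(\sqrt{kT})$ loss regret directly—no per-level regret decomposition is needed.
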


For $k\ge T^{1/3}$ the theorem gives an optimal regret bound of $\tO(\sqrt{kT})$.
For $k\le T^{1/3}$, we can extend a binary tree with $k$ leaves by turning each leaf into a node whose subtree is a balanced binary tree and we obtain a new tree with at most $2 T^{1/3}$ leaves.
We then associate with each new leaf as its action the action induced by its parent at the level of original leaves. One can show that the movements between the level of the original actions is then controlled by $O(T^{2/3})$ and we can then exploit this construction to achieve a regret bound of $\tO(T^{2/3})$. In any movement cost problem with at least two arms of fixed constant distance, a lower bound regret of $2$-arm switching cost applies, hence we observe that these rates are optimal for every $k\le T$ \citep{DekelDKP14}.

\paragraph{Continuum-armed bandit with movement cost.}

We can apply \cref{alg:alg1} to the problem of learning Lipschitz
functions over the real line with movement regret associated with standard metric over the interval. In this setting we assume an arbitrary sequence
of functions $f_1,\ldots, f_T: [0,1] \mapsto [0,1]$ where each function $f_t$ is $L$-Lipschitz.
i.e.,
\begin{align*}
|f_t(x)-f_t(y)| \le L |x-y|
\qquad\qquad
\forall ~ x,y \in [0,1]
~.
\end{align*}
Let $x_t$ be the action selected by the player at time $t$. The
objective is then to minimize the \emph{movement regret}, defined:
\begin{align*}
\mregret(f_{1:T},|\cdot|)
=
\EE{\sum_{t=1}^T f_t(x_t) +\sum_{t=1}^T |x_t-x_{t+1}|} - \min_{x\in [0,1]} \sum_{t=1}^T f_t(x)
~.
\end{align*}

One application of our algorithm is a regret bound for Lipschitz functions:
\begin{theorem}\label{thm:lipschitz}
There exists an algorithm (based on \cref{alg:alg1})
that for every sequence of $L$-Lipschitz loss functions $f_1,\ldots,f_T$, with $L\ge 1$, achieves: 
\begin{align*}
\mregret(f_{1:T},|\cdot|)  = \tO\lrbig{ L^{1/3}T^{2/3} }
~.
\end{align*}
\end{theorem}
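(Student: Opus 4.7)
The plan is to reduce the continuous Lipschitz problem to a discrete bandit-with-tree-metric instance, and invoke \cref{thm:MAB}.

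First, I would discretize the interval $[0,1]$ using $k$ equally spaced points $x_1,\ldots,x_k$ with $x_i = (i-1)/(k-1)$, where $k$ is a power of two to be chosen. These $k$ points become the arms of a MAB instance in which the loss of arm $i$ at round $t$ is $f_t(x_i)$. I would associate arm $i$ with the $i$-th leaf (in the natural left-to-right order) of the complete binary tree $\cT$ used in \cref{sec:setup-bandits}. The key geometric observation is that the line metric is dominated by the tree metric: if $\LCA(i,j)$ sits at level $d$, then the $2^d$ leaves of its subtree correspond to an interval of length at most $2^d/(k-1)$, hence
\begin{align*}
|x_i - x_j| \leq \tfrac{2^d}{k-1} \leq \tfrac{2}{k}\cdot 2^{d_\cT(i,j)} \eq 2\,\tdistance(i,j).
\end{align*}
Consequently any movement cost measured in $|\cdot|$ is at most twice the corresponding tree movement cost.

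Second, I would apply \cref{alg:alg1} to the discretized loss sequence $\ell_t(i) = f_t(x_i)$, obtaining a sequence of arms $i_1,\ldots,i_T$ with
\begin{align*}
\EE{\sum_{t=1}^T f_t(x_{i_t}) + \sum_{t=2}^T \tdistance(i_t,i_{t-1})} - \min_{i^\st\in K}\sum_{t=1}^T f_t(x_{i^\st})
\eq \tO\lr{\sqrt{kT}+\tfrac{T}{k}}.
\end{align*}
Using the domination $|x_{i_t}-x_{i_{t-1}}| \le 2\,\tdistance(i_t,i_{t-1})$ on the left, and comparing to the continuous optimum via Lipschitzness (for any $x^\st\in[0,1]$ there is a leaf $i^\st$ with $|x^\st - x_{i^\st}| \le 1/k$, so $\sum_t f_t(x_{i^\st}) \le \sum_t f_t(x^\st) + LT/k$) on the right, I get
\begin{align*}
\mregret(f_{1:T},|\cdot|) \eq \tO\lr{\sqrt{kT} + \tfrac{T}{k} + \tfrac{LT}{k}}.
\end{align*}

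Finally, I would tune $k$. With $L\ge 1$, the dominant second-type term is $LT/k$, and balancing $\sqrt{kT} \asymp LT/k$ gives $k \asymp L^{2/3}T^{1/3}$ (rounded up to a power of two and capped at $T$); plugging this in yields $\sqrt{kT} \asymp LT/k \asymp L^{1/3}T^{2/3}$, while $T/k \le L^{1/3}T^{2/3}$ when $L\ge 1$, producing the claimed $\tO(L^{1/3}T^{2/3})$ bound. No step looks truly obstructive: the tree-dominates-line inequality is the only nontrivial piece, and it follows directly from the dyadic structure of the balanced tree embedded uniformly into $[0,1]$.
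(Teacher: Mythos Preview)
Your proposal is correct and follows essentially the same route as the paper's proof of \cref{thm:lipschitz2}: discretize $[0,1]$ into $k$ equally spaced arms, observe that the line distance is dominated (up to a constant) by the tree metric $\tdistance$, apply \cref{alg:alg1} to obtain the $\tO(\sqrt{kT}+T/k)$ guarantee, add the $LT/k$ discretization error via Lipschitzness, and balance $k\asymp L^{2/3}T^{1/3}$. The only differences are cosmetic (your grid $x_i=(i-1)/(k-1)$ versus the paper's $i/k$, and the resulting constant $2$ in the metric domination), and they do not affect the argument.
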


We emphasize that even without movement costs, there is an $\wt{\Omega}(T^{2/3})$ lower bound in this setting~\citep{Kleinberg04};
hence, the regret bound of \cref{thm:lipschitz} is essentially optimal.

We also note that the result, in fact, holds for any metric $\distance$ that is $L$-Lipschitz (for exact statement see \thmref{thm:lipschitz2}).

\subsection{Adaptive Pricing}
\label{sec:setup-pricing}

We consider the following model of online learning, with respect to
a stream of patient buyers with patience at most $\taumax$. In our
setting the seller posts at time $t=1$ prices $\p_1,\ldots,
\p_{\taumax+1}$ for the next $\taumax$ days in advance. Then at each
time step $t$ the seller posts price for the $t+\taumax$ day
$\p_{t+\taumax}$ and receives as feedback her revenue for day $t$.
The revenue at time $t$ depends on buyer $\bb_t$ and the sequence of
prices $\p_t,\p_{t+1},\ldots, \p_{t+\taumax}$ in the manner described below.

Each buyer $\bb_t$, in our setting, is a mapping from a sequence of
prices to revenues, parameterized by her \emph{value} $\val_t$ and
her patience $\tau_t$. The buyer proceed by observing prices $\p_t,
\ldots, \p_{t+\tau_t}$, and purchases the item at the lowest price
among these prices, if it does
not exceed her value. Thus the revenue from the buyer at time $t$ is
described as follows:
\[\buy{\bb_t}{\p_{t},\ldots,\p_{t+\taumax}}= \begin{cases}
\min\{\p_t,\ldots, \p_{t+\tau_t}\} & \mbox{if } \min\{\p_t,\ldots, \p_{t+\tau_t}\}\le \val_t ,\\
0 & \textrm{otherwise.}
\end{cases}\]
Note that at time $t$ the buyer decides whether it will purchase and
when. Here, we assume that the buyer also gets to order the good at day of arrival (at price and time decided by him according to his patience and private value), thus the seller observes the buyer's decision at time $t$, namely the feedback at time $t$ is given by
$\buy{\bb_t}{\p_t,\ldots,\p_{t+\taumax}}$. We note that this
 feedback model differs from \citet{FeldmanKLMZ16} where the buyer buy at day of purchase. However, we note  that both lower and upper bounds derived by Feldman et al. apply to our feedback model as noted there in the discussion.

Our objective is to construct an algorithm that minimizes the regret
which is the difference between revenue obtained by the best fixed
price in hindsight and the expected revenue obtained by the seller, given a sequence $\bb_{1:T}$ of buyers:
\[
\regret(\bb_{1:T})
= \max_{\p^*\in P} \sum_{t=1}^T \buy{\bb_t}{\p^*,\ldots ,\p^*}
- \EE{\sum_{t=1}^T \buy{\bb_t}{\p_{t},\ldots \p_{t+\taumax}}}
.
\]

Our main result with respect to adaptive pricing is as follows:
\begin{theorem}\label{thm:pp}
There exists an algorithm (see \cref{alg:pp} in \cref{sec:pricing}) that for any sequence of buyers $\bb_1,\ldots,\bb_T$ with maximum patience $\taumax$ achieves the following regret bound:
\[\regret(\bb_{1:T}) = \tO(\taumax^{1/3}T^{2/3})~.\]
\end{theorem}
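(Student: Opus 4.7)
The plan is to reduce the adaptive pricing problem to a MAB with movement costs on a binary tree and invoke \thmref{thm:MAB} (the \klm algorithm). Discretize $[0,1]$ into $k$ equally-spaced prices (with $k$ to be chosen at the end) and associate them with the leaves of a complete binary tree with $k$ leaves, inducing the tree metric $\tdistance$ of~\eqref{eq:Delta}; note that $|p_i - p_j| \le \tdistance(p_i,p_j)$, so the tree metric uniformly dominates the real-line distance on the grid. The algorithm will run \klm on this grid with the scaled metric $\distance = \taumax \tdistance$ and treat as its per-round bandit loss the hypothetical impatient-buyer loss $\tell_t(i) = 1 - p_i\ind{p_i \le \val_t}$.

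I would then decompose the regret into three contributions: (i) discretization error, bounded by $T/k$ since rounding the best continuous price to the nearest grid point loses at most $1/k$ per round; (ii) the \emph{patience penalty}, i.e., the gap $\sum_t(p_{i_t}\ind{p_{i_t} \le \val_t} - R_t)$ between the hypothetical impatient revenue and the actual patient revenue $R_t = m_t\ind{m_t\le \val_t}$ where $m_t = \min(p_{i_t},\ldots,p_{i_{t+\tau_t}}) \le p_{i_t}$; and (iii) the \klm bandit regret on the surrogate losses. For (ii), the per-round gap is easily verified to be bounded by $p_{i_t} - m_t \le \sum_{s=0}^{\taumax-1}|p_{i_{t+s}} - p_{i_{t+s+1}}|$, and summing over $t$ while exchanging the order of summation gives
\begin{align*}
\tsum_{t=1}^T \lrbig{p_{i_t}\ind{p_{i_t} \le \val_t} - R_t} \leq \taumax \tsum_{t=1}^{T-1} |p_{i_t} - p_{i_{t+1}}| \leq \taumax \tsum_{t=1}^{T-1}\tdistance(p_{i_t}, p_{i_{t+1}}) ~.
\end{align*}

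For (iii), \thmref{thm:MAB}, scaled to a tree metric with maximum value $C = \taumax$ (as used in the Lipschitz extension of \thmref{thm:lipschitz}), guarantees
\begin{align*}
\EE{\tsum_t \tell_t(i_t)} + \taumax \EE{\tsum_t \tdistance(p_{i_t}, p_{i_{t+1}})} - \min_{i^\st}\tsum_t \tell_t(i^\st) \leq \tO\lrbig{\sqrt{kT} + \taumax T/k}~.
\end{align*}
The movement term on the left absorbs exactly the patience penalty from (ii); adding (i), the total regret is at most $\tO(T/k + \sqrt{kT} + \taumax T/k)$. Balancing the dominant terms with $k = \Theta((\taumax^2 T)^{1/3})$ yields $\tO(\taumax^{1/3} T^{2/3})$.

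The main technical obstacle is the bandit feedback: the \klm analysis presumes an (unbiased estimator of an) observation of the surrogate impatient loss $\tell_t(i_t)$, but the seller only observes the patient revenue $R_t$, which in general does not determine $\tell_t(i_t)$ since the buyer may buy at a later, cheaper price and leave the relation between $p_{i_t}$ and $\val_t$ unknown. The plan is to feed $1-R_t$ to \klm as a biased feedback and show that the resulting bias is bounded in expectation by precisely the patience penalty of (ii), so that the \klm regret bound transfers without additional loss.
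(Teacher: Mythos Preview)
Your decomposition into (i) discretization, (ii) patience penalty, and (iii) bandit regret is correct, and the parameter balancing $k = \Theta((\taumax^2 T)^{1/3})$ gives the right rate. The bound $p_{i_t}\ind{p_{i_t}\le v_t} - R_t \le p_{i_t}-m_t$ and the telescoping to $\taumax\sum_t|p_{i_{t+1}}-p_{i_t}|$ are also fine. The gap is entirely in your last paragraph: the plan to ``feed $1-R_t$ and bound the bias'' does not go through as stated, and this is not a detail---it is the whole difficulty.

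There are two coupled problems you have not addressed. First, \emph{timing}: the seller posts prices $\taumax$ days ahead, so when you identify bandit round $t$ with buyer $t$, the action $i_t$ was already committed $\taumax$ rounds earlier, and the feedback $R_t$ depends on the algorithm's choices $i_t,\ldots,i_{t+\taumax}$; equivalently, in the forward indexing, you have $\taumax$-delayed composite feedback. Second, \emph{adaptivity of the bias}: even granting the timing, \klm's guarantee is not on ``whatever number you feed it''---its analysis goes through the importance-weighted estimator $\bell_{t,0}(i)=\ind{i_t=i}\cdot f_t/p_t(i)$ and uses that $f_t=\ell_t(i_t)$ for an \emph{oblivious} loss $\ell_t$ to get $\E[\bell_{t,0}(i)]=\ell_t(i)$. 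If you feed $f_t=1-R_t$, then $f_t$ depends on $i_t,\ldots,i_{t+\taumax}$, which are correlated with each other (via lazy sampling) and with $p_t$. The resulting bias in $\E[\bell_{t,0}(i)]$ is \emph{not} $\E[|\ell_t(i_t)-(1-R_t)|]$; it is scaled by the importance weight $1/p_t(i)$ and entangled with the sampling correlations, and there is no reason it should be bounded by the movement cost.

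The paper's proof handles this with two ingredients you are missing. First, it \emph{blocks} time into length-$\taumax$ windows (so there are $\TT=T/\taumax$ bandit rounds, each posting a single block price), which reduces the composite dependence to just two consecutive block prices. Second, it uses the \emph{blinded bandit} trick of \cite{dekel2014blinded}: independent Bernoulli coins $\beta_t$ determine a sparse set $S$ of rounds on which \klm is updated, arranged so that for every $t\in S$ the block price does \emph{not} change between blocks $t$ and $t+1$. On those rounds every buyer in block $t$ sees only the single price $\p'_t$, so the observed block revenue is exactly the impatient revenue and \klm receives $\ell_t(i_t)$ with no bias at all. The regret on $S$ then lifts to the full sequence because the coins are independent of everything else (\cref{lem:buyerregret}), and the movement cost of \klm on $S$ controls the patience penalty as in your step (ii). Without the blocking and the blinded-feedback device (or an equivalent mechanism that makes the observed signal an oblivious function of the current action), your step (iii) does not follow.
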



It is interesting to note that even though a lower bound of
$\Omega(T^{2/3})$ stems from two different sources we can still
achieve a regret rate of $\tO(T^{2/3})$. Indeed,
\citet{KleinbergL03} showed that optimizing over the continuum
$[0,1]$ leads to a lower bound of $\Omega(T^{2/3})$, irrespective of
the patience of the buyers. Second, \citet{FeldmanKLMZ16} showed
that whenever the seller wishes to optimize between more than two
prices, a lower bound of $\Omega(T^{2/3})$ holds for patient buyers.

In this work we deal with both obstacles together---patient buyers and optimization over the $[0,1]$ interval---yet the two obstacles can be dealt without leading to a regret bound that is necessarily worse then each obstacle alone.

Our solution to the adaptive pricing problem is based on employing a MAB
with movement costs algorithm that allows small change in the prices.
The reason one needs to employ an algorithm with small movement cost
stems from the memory of the buyers: roughly speaking, whenever the
seller encounters a buyer with patience,  the potential revenue of
the seller will be the revenue at time $t$ minus any discount price
that buyer may encounter on future days. Indeed, for the case of two
prices, \citet{FeldmanKLMZ16} constructed a sequence
of buyers that reduces the problem to MAB with switching cost: a
step in demonstrating a $\Omega(T^{2/3})$ regret bound: thus a
fluctuation in prices is indeed a cause for a high regret.


\section{Overview of the approach and techniques}
\label{sec:techniques}

In this section we give an informal overview of the main ideas in the paper and describe the techniques used in our solution.
We begin with the main ideas behind our main result: an optimal and
efficient algorithm for MAB problems with movement costs.
Later we continue with the adaptive pricing problem, and show how it
is abstracted as an instance of the MAB problem with movement costs.

\paragraph{From continuum-armed to multi-armed.}

In our main applications, we consider actions that are associated to points on the interval $[0,1]$ equipped with the natural metric $\Delta(x,y) = \abs{x-y}$.
As a preliminary step, we use discretization in order to make the action space finite and capture the setting by the MAB framework.
That is, we reduce the problem of minimizing regret over the entire $[0,1]$ interval to regret minimization over $k$ actions associated with the equally-spaced points $K = \set{\frac{1}{k},\frac{2}{k},\ldots,1}$.
Our challenge is to then to design a regret minimization algorithm over $\cA$ whose cumulative movement cost with respect to the metric $\wt{\Delta}(i,j) = \abs{i-j}/k$ is bounded.

Our approach builds upon the basic techniques underlying
the \expth algorithm for the basic MAB problem, which we recall here. 
\expth maintains over rounds a distribution $p_t$ over the $k$ actions and
chooses an action $i_t \sim p_t$; thereafter, it updates its
sampling distribution multiplicatively via  $p_{t+1}(i) \;\propto\;
p_t(i) \cdot \exp(-\eta \bell_t(i))$, where $\bell_t$ is an unbiased
estimator of true loss vector $\ell_t$ constructed using only the
observed feedback $\ell_t(i_t)$. Specifically, the estimator used by
\expth is
\begin{align*}
\bell_t(i) &= \frac{\ind{i_t = i}}{p_t(i)} \ell_t(i_t) \qquad\quad
\forall ~ i \in K ~.
\end{align*}
A simple computation shows that $\bell_t$ is indeed an unbiased
estimator of $\ell_t$, namely that $\E[\bell_t] = \ell_t$, and the
crucial bound for $\expth$ is then obtained by controlling a
variance term of the form $\E[p_t \cdot \bell_t^2]$, and showing
that it is of the order $\tO(k)$ at all rounds $t$. This in turn
implies the $\tO(\sqrt{kT})$ bound of \expth.

\paragraph{Controlling movements with a tree.}

As a first step in controlling the movement costs of our algorithm, 
one can think of an easier problem of controlling the number of times the algorithm switches between actions in the left part of the interval, namely in $A_L = \set{\frac{1}{k},\ldots,\frac{1}{2}}$, and actions in the right part of the interval, $A_R = \set{\frac{1}{2}+\frac{1}{k},\ldots,1}$.
Indeed, since each such switch might incur a high movement cost (potentially close to $1$), any algorithm for MAB with movement costs must avoid making such switches too often.
In principle, a solution to this simpler problem can be then lifted to a solution to the actual movement costs problem by applying it recursively to each side of the interval.

The thought experiment above motivates our tree-based metric: this metric assigns a fixed cost of $1$ to any movement between the left and right parts of the interval---that correspond to the topmost left and right subtrees---and recursively, a cost of $2^d/k$ for any movement between subtrees in level $d$ of the tree.
The tree metric is always an upper bound on the natural metric on the interval, namely $\wt{\Delta}(i,j) \le \frac{1}{k}2^{d_{\mathcal{T}}(i,j)} = \wt\Delta_{\mathcal{T}}(i,j)$, so that controlling movement costs with respect to $\wt\Delta_{\mathcal{T}}$ suffices for controlling movement costs with respect to the natural distance on $[0,1]$.
While this upper bound might occasionally be very loose,%
\footnote{For example, the distance between $\frac{1}{2}-\frac{1}{k}$ and $\frac{1}{2}+\frac{1}{k}$ according to the metric $\Delta_\cT$ is $1$.}
the tree-metric effectively captures the difficulties of the original movement costs problem with the natural metric over $[0,1]$.

Hence, we can henceforth focus on constructing an algorithm with low
movement costs with respect to a tree-based metric over a full binary tree.
To accomplish this, we will regulate the probability of switching the ancestral node. 
Namely, if we denote by $A_d(i)$ the subtree at level $d$ of the tree
containing action $i$, our goal is to design an algorithm that switches between actions $i$ and $j$ such that $A_d(i) \ne A_d(j)$ with probability at most $2^{-d}$.
%
This would ensure that the expected contribution of level $d$ in the tree to the movement cost of the algorithm is $O(1/k)$ per round. Indeed, switching between subtrees at level $d$ (while not making a switch at higher levels) results with a movement cost of roughly $2^d/k$.
Overall, the contribution of all layers in the tree to the total movement cost would then be $O((T/k)\log{k})$, as required.


\paragraph{Lazy sampling.}

Our challenge now is to construct an algorithm that switches infrequently between subtrees at higher levels of the tree.
However, recall that typical bandit algorithms choose their actions $i_1,\ldots,i_T$ at random from sampling distributions $p_1,\ldots,p_T$ maintained throughout the evolution of game.
In order to guarantee that consecutive actions $i_t$ and $i_{t-1}$ will belong to the same subtree with high probability, the algorithm would have to sample $i_{i}$ in a way which is highly correlated with the preceding action $i_{t-1}$.

Suppose that the marginals of the subtrees at some level $d$ does not change between the distributions $p_{t-1}$ and $p_{t}$; namely, that the cumulative probability assigned to the leaves of each such subtree by both $p_{t-1}$ and $p_t$ is the same.
In this case, we argue that we can sample our new action~$i_{t}$ at time $t$, based on the preceding action $i_{t-1}$, from the conditional distribution $p_t(\cdot \mid A_{d}(i_{t-1}))$.
In other words, if we think of sampling an action $i$ from $p_t$ as sampling a path in the tree leading to the leaf associated with $i$, then for determining $i_t$ on round $t$ we copy the top $d$ edges from the path at time $t-1$, and only sample the remaining bottom edges (those contained in the subtree $A_d(i_{t-1})$) according to the new distribution $p_t$.
Intuitively, this can be justified because the distribution of the top $d$ edges in the path leading to $i_{t}$ is the same as that of the top $d$ edges in the path leading to $i_{t-1}$, 
so we may as well keep the random bits associated with them and only resample bits associated with the remaining edges from fresh.






The lazy sampling scheme sketched above raises a major difficulty in the analysis: since $i_t$ is sampled from a conditional of $p_t$ that might be very different from $p_t$ itself, it is no longer clear that $i_t$ is distributed according to the ``correct'' distribution. 
In other words, conditioned on $p_t$ (which intuitively is a summary of the past), the random variable $i_t$ is certainly not distributed according to $p_t$.
Nevertheless, our analysis demonstrates a crucial property of the distributions $p_t$ maintained the sampling scheme, which is sufficient for the regret analysis: we show that for all subtrees $A$ at all levels of the tree, it holds that
$$
\EE{\frac{\ind{i\in A}}{p_t(A)}}=1 ~.
$$
That is, even though $i_t$ is sampled indirectly from $p_t$, it is still distributed according to $p_t$ in a certain sense.

\paragraph{Rebalancing the marginals.}

The lazy sampling we described above reduced the problem of controlling the frequency of movements in the actions $i_1,\ldots,i_T$, to controlling the frequency in which the marginal distribution of $p_1,\ldots,p_T$ over subtrees is updated by our algorithm.
Next, we describe how the latter is accomplished (where the
frequency of update is exponentially-decreasing with the level of
the subtree).
To illustrate the technique, let us consider an easier problem:
instead of demanding infrequent updates for subtrees in all levels,
we shall only attempt to rebalance the marginals at the topmost
level, with the goal of making them being updated with probability
at most $2^{-D} = 1/k$.
We will demonstrate how the estimator $\tell_t$ can be modified in a way that induces such infrequent updates at the top level.
Denote the left subtree at the top level by $A_L$ (containing actions $\frac{1}{k},\ldots,\frac{1}{2}$) and the right topmost subtree by $A_R$ (containing actions $\frac{1}{2}+\frac{1}{k},\ldots,1$).
First, we choose
\begin{align*}
\sig_t
=
\mycases
    {1-\frac{1}{\del}}{with probability $\del$;}
    {1}{with probability $1-\del$.}
\end{align*}
Then, for $A \in \set{A_L,A_R}$ we set
\begin{align*}
\tell_t(i)
&=
\bell_t(i) - \frac{\sig_t}{\eta} \log\lr{ \sum_{j \in A} \frac{p_t(j)}{p_t(A)} e^{-\eta \bell_t(j)} }
\qquad\quad
\forall~i \in A
~.
\end{align*}
Here, $\bell_t$ is the basic \expth estimator discussed earlier.
In terms of estimation, $\tell_t$ is still an unbiased estimator of
the true vector $\ell_t$: since $\E[\sig_t] = 0$ it follows that
$\E[\tell_t] = \ell_t$. However, the added term has a balancing
effect at the top level of the tree: a simple computation reveals
that if $\sig_t = 1$ (which occurs with high probability), the
multiplicative update of the algorithm applied on the vector
$\tell_t$ ensures that $p_t(A_L) = p_{t+1}(A_L)$ and $p_t(A_R) =
p_{t+1}(A_R)$. In other words, with probability $1-\del$, the
cumulative (i.e., marginal) probability of both subtrees at the top
level is remained fixed between rounds $t$ and $t+1$.

The balancing effect we achieved comes at a price: for small values
of $\delta$ the magnitude of $\tell_t$ becomes large, as it might be
the case that $\sig_t \approx -1/\del$. Nevertheless, it is not hard
to show that the variance term $\E[ p_t \cdot \tell_t^2]$ is bounded
by $\O(k + 1/\del)$. In particular, for $\del = 1/k$ we retain a
variance bound of $O(k)$, while changing the marginals of the two
top subtrees with probability no larger than $1/k$. As a result, by
sampling accordingly from the slowly-changing distributions $p_t$ we
can ensure that the movements at the top level contribute at most
$O(T/k)$ to the total movement cost of the algorithm.




Evidently, the estimator described above only remedies the problem
at the top level, and the movement costs at lower levels of the tree
might still be very large (effectively, within each subtree the
algorithm does nothing but simulating \expth on the leaves).
%
Still, using a similar yet more involved technique we can induce a
balancing effect at all levels simultaneously
and ensure that the marginal probabilities of the
subtrees at level~$d$ are modified by the algorithm with probability
at most $2^{-d}$. The construction adds a balancing term
corresponding to each level of the tree in a recursive manner that
takes into account the balancing terms at lower levels. 

\paragraph{From adaptive pricing to bandits.}

We now discuss how to reduce adaptive pricing with patient buyers to
a MAB problem with movement costs.
We employ a reduction similar to the one used by
\cite{KleinbergL03}; however, the patience of the buyers introduce
some difficulties, as we discuss below. For now, we ignore the
buyers' patience and give the idea of the reduction in the simplest
case.

Intuitively, in order to adaptively pick prices from the interval
$[0,1]$ so as to minimize regret with respect to the best fixed
price in hindsight, we could directly apply a standard MAB
algorithm, e.g., \expth, over a discretization $\cA =
\set{\tfrac{1}{k},\tfrac{2}{k},\ldots,1}$ of the interval, treating
each of the $k$ prices as an arm that generates a reward 
whenever it is pulled. 
Furthermore, since the buyers' valuations are not disclosed after purchase, the feedback observed by the seller is very limited and nicely captured by the MAB abstraction.
Since the buyers' valuations are one-sided Lipschitz, the best price in $\cA$ will
lose at most $O(T/k)$ in total revenue as compared to the best fixed
price in the entire $[0,1]$ interval. Thus, provided an algorithm that achieves $\tO(\sqrt{kT})$ expected regret with respect to the best
price in $\cA$, we could pick $k= \Theta(T^{1/3})$ and obtain the optimal
$\tO(T^{2/3})$ regret for the pricing problem.

\paragraph{Patient buyers and movement costs.}

A main complication in the above MAB approach arises from the
buyers' patience: the revenue extracted from a single buyer is
determined not only by the price posted by the seller on the day of
the buyer's arrival, but also by prices posted on the subsequent
days subject to the buyer's patience. As a result, if the seller
change prices abruptly on consecutive days, a strategic buyer---that
purchases in the minimal price, if at all---could make use of this
fact to gain the item at a lower price, which lowers the revenue of
the seller. Roughly speaking, the latter additional cost to the
seller is controlled by
the absolute difference between the prices she posted at consecutive
days.
%
Thus, the pricing problem with patient buyers can be reduced to a
MAB problem with movement costs, where the online player suffers an
additional movement cost each time she changes actions, and the
movement cost is determined by the metric (absolute value distance)
between the respective actions.

The reduction sketched above is made precise in \cref{sec:pricing},
where we also address an additional difficulty stemming from the
adaptivity of the \emph{feedback signal} observed by the seller: the
latter is contaminated by the effect of prices posted at earlier
rounds on the buyers, and has to be treated carefully.


\section{The Slowly Moving Bandit Algorithm}
\label{sec:mab}

In this section we present the Slowly Moving Bandit (\klm) algorithm: our optimal algorithm for the Multi-armed bandit problem with movement costs.

In order to present the algorithm we require few additional notations.  Recall that in our setting, we consider a complete binary tree of depth $D =\log_2{k}$ whose leaves are identified with the actions $1,\ldots,k$
(in this order). For any level $0 \le d \le D$ and arm $i \in K$,
let $A_d(i)$ be the set of leaves that share a common ancestor with $i$ at level $d$ (where level $d=0$ are the singletons). We denote
by $\cA_d$ the collection of all $k/2^d$ subsets of leaves:
\begin{align*}
\cA_d = \lrset{ \set{1,\ldots,2^d}, \set{2^d+1,\ldots,2 \cdot 2^d}, \ldots, \set{ k-2^d+1, \ldots, k } }
\qquad\quad
\forall ~ 0 \le d \le D
~.
\end{align*}

The \klm algorithm is presented in \cref{alg:alg1}. The
algorithm is based on the multiplicative update method,
and in that sense is reminiscent of the \textsc{Exp3} algorithm
\citep{auer2002nonstochastic}. Similarly to \textsc{Exp3}, the
algorithm computes at each round $t$ an estimator $\tell_t$ to the
true, unrevealed loss vector $\ell_t$ using the single loss value
$\ell_t(i_t)$ observed on that round.

As discussed in \cref{sec:techniques}, in addition to being an (almost) unbiased estimate for the true loss vector, the estimator $\tell_t$ used by \klm has the
additional property of inducing slowly-changing sampling
distributions $p_t$, that allow for sampling the actions $i_t$ in a
way that the overall movement cost is controlled.
This is achieved by choosing at random, at each round $t$, a level $d_t$ of the tree to be rebalanced by the algorithm using the balancing vectors $\bell_{t,d}$.
For reasons that will become apparent later on, the level $d_t$ is determined by choosing a random sign $\sig_{t,d}$ for each level $d$ in the tree and identifying the bottommost level with a negative sign.
Then, as we show in the analysis, the terms $\bell_{t,d}$ defined using the signs $\sig_{t,d}$ have a balancing effect at levels $d \ge d_t$.

A major difficulty inherent to our approach, also common to many bandit optimization settings (e.g., \citealp{dani2007price,alon2015online,bubeck2016kernel}), is the fact that the estimated losses $\tell_t(i)$ might receive negative values that are very high in absolute value. 
Indeed, the balancing term $\bell_{t,d}$ corresponding to level $d$ is roughly as large as $2^d/p_t(i_t)$, and might appear in negative sign in $\tell_t$.
\cref{alg:alg1} resolves this issue by zeroing-out the estimator $\tell_t$ whenever it chooses an action whose probability is too small, which ensures that the $\bell_{t,d}$ terms never become too large.
We remark that the standard approaches used to resolve such issues (the simplest of which is mixing the distribution $p_t$ with the uniform distribution over the $k$ actions) fail in our case, as they break the rebalancing effect which is tailored to the specific multiplicative update of the algorithm.


\begin{myalgorithm}[ht]
\wrapalgo[0.67\textwidth]{
Initialize $p_1 = u$, $d_0 = D$ and $i_0 \sim p_1$; for $t=1,\ldots,T$:
\begin{enumerate}[nosep,label=(\arabic*)]
\item
Choose action $i_t \sim p_t(\,\cdot \mid A_{d_{t-1}}(i_{t-1}))$, observe loss $\ell_t(i_t)$
\item
Choose $\sig_{t,0},\ldots,\sig_{t,D-1} \in \set{-1,+1}$ uniformly at random;\\
let $d_t = \min\set{0 \le d \le D : \sig_{t,d} < 0}$ where $\sig_{t,D} = -1$
\item
Compute vectors $\bell_{t,0},\ldots,\bell_{t,D-1}$ recursively via
\begin{flalign*} 
\bell_{t,0}(i)
=
\frac{\ind{i_t = i}}{p_t(i)} \ell_t(i_t)
~,
&&
\end{flalign*}
and for all $d \ge 1$:
\begin{flalign*}
\bell_{t,d}(i)
=
-\frac{1}{\eta} \log\lr{ \sum_{j \in A_{d}(i)} \frac{p_t(j)}{p_t(A_{d}(i))} e^{ -\eta (1+\sig_{t,d-1}) \bell_{t,d-1}(j) } }
&&
\end{flalign*}
\item
Define $B_t = \set{\text{$p_t(A_d(i_t)) < 2^d\eta$ for some $0 \le d < D$}}$ and set
\begin{align*}
\tell_t
=
\mycases
	{0}{if $i_t \in B_t$;}
	{\bell_{t,0} + \sum_{d=0}^{D-1} \sig_{t,d} \bell_{t,d}}{otherwise}
\end{align*}
\item
Update:
\begin{align*}
p_{t+1}(i) = \frac{ p_t(i) \, e^{-\eta\tell_t(i)} }{ \sum_{j=1}^k
p_t(j) \, e^{-\eta \tell_t(j)} } \qquad \forall ~ i \in K
\end{align*}
\end{enumerate}
}
\caption{The \klm algorithm.} \label{alg:alg1}
\end{myalgorithm}
The following theorem is the main result of this section. \thmref{thm:MAB} is an immediate corollary.

\begin{theorem} \label{thm:main}
For any sequence of loss functions $\ell_1,\ldots, \ell_T$, The \klm algorithm (\cref{alg:alg1}) guarantees that
\[
\regret(\ell_{1:t}) = O\lr{ \frac{\log k}{\eta} + \eta T k\log{k} }.
\]
In particular, by setting $\eta=1/\sqrt{kT}$ the expected regret of the algorithm is bounded by $\O(\sqrt{Tk}\log{k})$.
Furthermore, for the metric $\tdistance$ (see \cref{eq:Delta}), the expected total movement cost of the algorithm is $\E[ \sum_{t=2}^T \tdistance(i_t,i_{t-1}) ] = O((T/k)\log k)$.
\end{theorem}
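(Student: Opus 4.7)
The plan is to follow the standard exponentially-weighted multiplicative-update potential argument applied to the surrogate losses $\tell_t$, convert the resulting surrogate-regret bound into actual regret using unbiasedness of the estimator, and then bound the variance term. The standard OMD analysis of the update $p_{t+1}(i)\propto p_t(i)\,e^{-\eta\tell_t(i)}$ yields, for any fixed comparator $i^\st$,
\[
\sum_{t=1}^T \langle p_t,\tell_t\rangle \,-\, \sum_{t=1}^T \tell_t(i^\st) \,\le\, \frac{\log k}{\eta} \,+\, \eta\sum_{t=1}^T \langle p_t,\tell_t^{\,2}\rangle,
\]
valid provided $\eta\,\tell_t(i)\ge -1$ uniformly in $i,t$. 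The zeroing of $\tell_t$ on the event $B_t$ (where some $p_t(A_d(i_t)) < 2^d\eta$) is precisely what enforces this boundedness: inductively, each $\bell_{t,d}$ cannot then exceed roughly $1/\eta$ in magnitude on the good event.

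The next step is to establish two expectation identities that promote the surrogate inequality into a true-loss regret bound. First, the \emph{lazy-sampling identity}: by induction on $t$, for every subtree $A$ of $\cT$, $\EE{\ind{i_t\in A}/p_t(A)} = 1$, which implies $\E[\bell_{t,0}] = \ell_t$ and $\E[\langle p_t,\bell_{t,0}\rangle] = \E[\ell_t(i_t)]$. The inductive step relies on the \emph{rebalancing identity}: conditional on $\sig_{t,d}=+1$, the multiplicative update with $\tell_t$ preserves $p_{t+1}(A)=p_t(A)$ for every $A$ at level $>d$. This identity is an algebraic consequence of the log-sum-exp form of $\bell_{t,d+1}$, which is engineered so that adding $\sig_{t,d}\bell_{t,d}$ to the potential redistributes mass inside each level-$(d+1)$ subtree without altering its total. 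Second, since $\E[\sig_{t,d}]=0$ and $\bell_{t,d}$ depends only on $\sig_{t,0},\ldots,\sig_{t,d-1}$, all balancing terms have zero conditional mean, so $\E[\tell_t(i^\st)]=\ell_t(i^\st)$ and $\E[\langle p_t,\tell_t\rangle]=\E[\ell_t(i_t)]$ up to an additive bias of order $\Pr[B_t]$ per round, which is absorbed in the variance term.

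For the variance I would show $\E[\langle p_t,\tell_t^{\,2}\rangle]=O(k\log k)$. A direct induction shows that $\bell_{t,d}$ is supported on $A_d(i_t)$ and constant within it (since level-$d$ subtrees not containing $i_t$ see a trivial log-sum-exp); on $\neg B_t$ one bounds $|\bell_{t,d}(i_t)|$ by a log-sum-exp/Jensen estimate using $p_t(A_d(i_t))\ge 2^d\eta$, obtaining $\langle p_t,\bell_{t,d}^{\,2}\rangle = O(k)$ per level, and summing across the $D=\log_2 k$ levels gives $O(k\log k)$. Plugged into the potential inequality, this yields the announced $O(\log k/\eta+\eta T k\log k)$ regret bound, and tuning $\eta=1/\sqrt{kT}$ produces the $\tO(\sqrt{kT})$ rate. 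The movement-cost bound is essentially automatic from the sampling rule: $i_t\in A_{d_{t-1}}(i_{t-1})$ forces $\tdistance(i_t,i_{t-1})\le 2^{d_{t-1}}/k$, and since $d_{t-1}$ is the index of the first $-1$ among i.i.d.\ uniform signs (with $\sig_{t-1,D}=-1$ by convention), $\Pr[d_{t-1}=d]=2^{-(d+1)}$ for $d<D$, so $\EE{\tdistance(i_t,i_{t-1})}=O(\log k/k)$ and summing yields the claimed $O((T/k)\log k)$ total movement cost.

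The main obstacle will be the rebalancing identity and, downstream of it, the lazy-sampling unbiasedness. Because $i_t$ is drawn from a conditional of $p_t$ rather than $p_t$ itself, the naive single-round unbiasedness of $\bell_{t,0}$ does not hold; one must carry an inductive invariant across rounds that simultaneously tracks the evolution of the $p_t$'s and the matching of sample marginals. Aligning the algebra of the log-sum-exp definition of $\bell_{t,d}$ with the marginal-preservation claim is the technical heart of the proof, and is also what makes the level-by-level variance calculation go through cleanly without suffering from large negative values of $\tell_t$.
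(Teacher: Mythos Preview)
Your outline matches the paper's proof almost exactly: second-order multiplicative-weights bound on the surrogates $\tell_t$, the boundedness $\tell_t(i)\ge -1/\eta$ enforced by zeroing on $B_t$, the rebalancing/lazy-sampling pair yielding $\E[\ind{i_t\in A}/p_t(A)]=1$, a bias bound of $\Pr[B_t]=O(\eta k\log k)$, and the movement-cost computation via the geometric law of $d_{t-1}$. All of this is right.

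There is one genuine gap, in the variance step. You propose to bound $\langle p_t,\bell_{t,d}^{\,2}\rangle$ on $\neg B_t$ using $p_t(A_d(i_t))\ge 2^d\eta$. That route does not give $O(k)$ per level. The Jensen estimate yields
\[
\bell_{t,d}(i)\;\le\;\frac{\ind{i_t\in A_d(i)}}{p_t(A_d(i))}\prod_{h=0}^{d-1}(1+\sig_{t,h}),
\qquad
\langle p_t,\bell_{t,d}^{\,2}\rangle\;\le\;\frac{1}{p_t(A_d(i_t))}\prod_{h=0}^{d-1}(1+\sig_{t,h})^2,
\]
and plugging in $p_t(A_d(i_t))\ge 2^d\eta$ (with the worst-case product $4^d$, or even with the expectation $2^d$ over the signs) leaves you with $2^d/\eta$ or $1/\eta$, both of which blow up the final bound. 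The correct argument does \emph{not} use the $\neg B_t$ lower bound at all; it takes a full expectation and invokes precisely the lazy-sampling identity you already established, in the form
\[
\E\!\left[\frac{1}{p_t(A_d(i_t))}\right]=\sum_{A\in\cA_d}\E\!\left[\frac{\ind{i_t\in A}}{p_t(A)}\right]=|\cA_d|=\frac{k}{2^d},
\]
together with $\E\bigl[\prod_{h<d}(1+\sig_{t,h})^2\bigr]=2^d$ (independence of the signs from $i_t,p_t$). Multiplying gives $\E[\langle p_t,\bell_{t,d}^{\,2}\rangle]\le k$ per level and hence $\E[\langle p_t,\tell_t^{\,2}\rangle]=O(k\log k)$. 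So the deterministic bound from $B_t$ is used \emph{only} for $\tell_t\ge -1/\eta$; the variance bound needs the probabilistic identity.

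A smaller imprecision: the rebalancing statement is not ``conditional on $\sig_{t,d}=+1$, marginals at level $>d$ are preserved'' but rather ``for all $d\ge d_t$ (equivalently, whenever $\sig_{t,0}=\cdots=\sig_{t,d-1}=+1$), marginals at level $d$ are preserved.'' A single $\sig_{t,d}=+1$ at an isolated level does not by itself force preservation above it; the algebra relies on all lower signs being $+1$ so that the recursive $\bell_{t,d}$ terms telescope correctly.
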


The rest of the section focuses on proving \cref{thm:main}.
We begin by stating a useful technical bound that we use throughout our analysis to control the magnitude of the balancing vectors $\bell_{t,d}$.
For a proof of the lemma, see \cref{sec:mabproofs} below.


\begin{lemma} \label{lem:bell}
For all $t$ and $0 \le d < D$ the following holds almost surely:
\begin{align} \label{eq:bell1}
0 \le \bell_{t,d}(i) \le \frac{\ind{i_t \in A_d(i)}}{p_t(A_d(i))}
\prod_{h=0}^{d-1} (1+\sig_{t,h}) \qquad \forall ~ i \in K \,.
\end{align}
In particular, if $\sig_{t,h} = -1$ then $\bell_{t,d} = 0$ for all $d > h$.
\end{lemma}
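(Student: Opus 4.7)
The plan is to prove both inequalities by induction on the level $d$. The base case $d=0$ follows directly from the definition $\bell_{t,0}(i) = \ind{i_t=i}\,\ell_t(i_t)/p_t(i)$: since $\ell_t(i_t) \in [0,1]$ and $A_0(i) = \set{i}$, and using the empty-product convention for the trailing $\prod_{h=0}^{-1}$, the claim reduces to $0 \le \bell_{t,0}(i) \le \ind{i_t \in A_0(i)}/p_t(A_0(i))$, which is immediate.

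For the inductive step at level $d$, assume the bound at level $d-1$. Nonnegativity of $\bell_{t,d}(i)$ is immediate: since $1+\sig_{t,d-1} \ge 0$ (as $\sig_{t,d-1} \in \set{-1,+1}$) and $\bell_{t,d-1}(j) \ge 0$ by induction, every exponent inside the $\log$ is nonpositive, so the convex combination there is at most $1$ and its $-\log$ is nonnegative. For the upper bound, the main trick is to linearize the recursive log-sum-exp via Jensen's inequality applied to the convex function $e^{-x}$, which yields
\[
\bell_{t,d}(i) \le (1+\sig_{t,d-1}) \sum_{j \in A_d(i)} \frac{p_t(j)}{p_t(A_d(i))}\, \bell_{t,d-1}(j).
\]
Plugging in the inductive hypothesis pulls out the accumulated factor $\prod_{h=0}^{d-2}(1+\sig_{t,h})$ and leaves the residual sum $\sum_{j \in A_d(i)} \frac{p_t(j)}{p_t(A_d(i))} \cdot \frac{\ind{i_t \in A_{d-1}(j)}}{p_t(A_{d-1}(j))}$.

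The key combinatorial observation is that $A_d(i)$ is the disjoint union of its two child $(d-1)$-subtrees; the indicator $\ind{i_t \in A_{d-1}(j)}$ selects the unique child containing $i_t$, which is a subset of $A_d(i)$ exactly when $i_t \in A_d(i)$. Within that child, $p_t(A_{d-1}(j))$ is constant and equals the sum of $p_t(j)$ over that child, so the residual sum telescopes to $\ind{i_t \in A_d(i)}/p_t(A_d(i))$. Multiplying by the $(1+\sig_{t,d-1})$ factor from Jensen's step gives exactly the stated upper bound. The ``in particular'' clause follows as an immediate corollary: whenever $\sig_{t,h} = -1$, the zero factor $(1+\sig_{t,h})=0$ sits inside $\prod_{h'=0}^{d-1}(1+\sig_{t,h'})$ for every $d>h$, forcing the upper bound, and hence $\bell_{t,d}$ itself, to vanish. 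I do not foresee any serious obstacle; the only step that genuinely requires insight is recognizing Jensen's inequality as the tool that converts the recursive log-sum-exp definition into an additive recursion which telescopes cleanly through the binary tree structure.
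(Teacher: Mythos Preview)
Your proposal is correct and follows essentially the same approach as the paper's proof: induction on $d$, the base case via $\ell_t \in [0,1]$, nonnegativity from the exponents being nonpositive, and the upper bound via Jensen's inequality (the paper phrases it as convexity of $-\log x$, you phrase it as convexity of $e^{-x}$; the resulting linearized recursion is identical) followed by the same telescoping over the two children of $A_d(i)$. The paper's write-up of the telescoping step is terser and has slightly confusing indexing; your explicit description of how the indicator selects the unique child containing $i_t$ and how the $p_t$ factors cancel is in fact clearer.
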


One useful implication of the lemma is that, since $\bell_{t,d} = 0$ for all $d > d_t$, we can express our estimator $\tell_t$ in the following equivalent form:
\begin{align} \label{eq:tell-equiv}
\tell_t
=
\bell_{t,0} - \bell_{t,d_t} + \sum_{h=0}^{d_t-1} \bell_{t,h}
~.
\end{align}

\subsection{Rebalancing the marginals}

Our first step is to show that the marginals of the distributions $p_t$ over subtrees of actions are not modified by the algorithm with high probability, as a result of adding the balancing vectors $\bell_{t,d}$.

\begin{lemma} \label{lem:movement}
For all $d \ge d_t$ we have that $p_{t+1}(A) = p_t(A)$ for all $A \in \cA_{d}$.
\end{lemma}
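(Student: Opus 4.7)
I will first dispose of the trivial cases ($i_t \in B_t$ or $d_t = 0$), then reduce to proving marginal preservation at level $d_t$ itself, and finally establish that identity by an induction on the level that exploits the log-sum-exp structure in the definition of the balancing vectors.

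If $i_t \in B_t$ the estimator $\tell_t$ is zero and $p_{t+1} = p_t$, so the claim is immediate; similarly, if $d_t = 0$ then $\sig_{t,0} = -1$ and \cref{lem:bell} forces $\bell_{t,d} = 0$ for all $d \ge 1$, giving $\tell_t = \bell_{t,0} + \sig_{t,0}\bell_{t,0} = 0$ and again $p_{t+1}=p_t$. For the remaining case I assume $i_t \notin B_t$ and $d_t \ge 1$. Since every $A \in \cA_d$ with $d > d_t$ is a disjoint union of elements of $\cA_{d_t}$, it suffices to prove $p_{t+1}(A) = p_t(A)$ for every $A \in \cA_{d_t}$.

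Using the equivalent form \eqref{eq:tell-equiv}, write $\tell_t = \bell_{t,0} + \sum_{h=0}^{d_t-1}\bell_{t,h} - \bell_{t,d_t}$. For any $A \in \cA_{d_t}$ and every $i \in A$ the value $\bell_{t,d_t}(i)$ depends only on $A_{d_t}(i) = A$; denote its common value by $\bell_{t,d_t}(A)$. Pulling $e^{\eta \bell_{t,d_t}(A)}$ out of $\sum_{i\in A} p_t(i) e^{-\eta\tell_t(i)}$, the key step is to show
\[
S_d(A) \eqdef \sum_{i \in A} p_t(i) \exp\!\Big(-\eta \bell_{t,0}(i) - \eta\sum_{h=0}^{d-1}\bell_{t,h}(i)\Big) = p_t(A)\,e^{-\eta \bell_{t,d}(A)}
\]
for every $1 \le d \le d_t$ and $A \in \cA_d$, where $\bell_{t,d}(A)$ is the common value of $\bell_{t,d}$ on $A$. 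The base case $d=1$ unwinds directly from the definition of $\bell_{t,1}$ using $1+\sig_{t,0} = 2$. For the inductive step, split each $A \in \cA_d$ into its two children $A', A'' \in \cA_{d-1}$; since $\bell_{t,d-1}$ is constant on each child, factoring $e^{-\eta\bell_{t,d-1}(\cdot)}$ out of each inner sum and applying the induction hypothesis yields $S_d(A) = \sum_{A' \subset A} p_t(A')\, e^{-2\eta \bell_{t,d-1}(A')}$, and since $d-1 < d_t$ implies $\sig_{t,d-1}=+1$, the right-hand side is exactly what the defining log-sum-exp of $\bell_{t,d}$ evaluates to after rearrangement, giving $S_d(A) = p_t(A) e^{-\eta\bell_{t,d}(A)}$.

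Applying the claim at $d=d_t$ yields $\sum_{i \in A} p_t(i)e^{-\eta\tell_t(i)} = p_t(A)$ for each $A \in \cA_{d_t}$; summing over $\cA_{d_t}$ shows the normalizer $Z_t$ equals $1$, so $p_{t+1}(A) = p_t(A)$, and the conclusion for all $d \ge d_t$ follows by aggregation. The main subtlety to track carefully is the matching of exponents: each inductive step trades one factor of $2\eta\bell_{t,d-1}$ for one factor of $\eta\bell_{t,d}$, which works cleanly only because $\sig_{t,d-1}=+1$ throughout $d-1 < d_t$ makes $(1+\sig_{t,d-1})=2$. This is precisely why the recursive definition of $\bell_{t,d}$ is engineered with the $(1+\sig_{t,d-1})$ coefficient, and it is the one calculation where sign-tracking of the $\sig_{t,d}$'s is essential.
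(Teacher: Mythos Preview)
Your proposal is correct and follows essentially the same approach as the paper: both reduce to level $d_t$, then prove by induction on the level that the log-sum-exp structure of the $\bell_{t,d}$ recursion collapses the weighted sum $\sum_{i\in A} p_t(i)e^{-\eta\tell_t(i)}$ to $p_t(A)$, using that $\sig_{t,h}=+1$ for $h<d_t$. The paper packages the inductive identity as a separate lemma (\cref{lem:balance}) with base case $d=0$, while you inline it with base case $d=1$ and handle $d_t=0$ upfront; these are purely presentational differences.
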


For the proof, we require the next technical result about the balancing vectors $\bell_{t,d}$ computed by the algorithm.

\begin{lemma} \label{lem:balance}
If $\sig_{t,0} = \ldots = \sig_{t,d-1} = 1$ then:
\begin{align*} 
\sum_{i \in A} p_t(i) e^{-\eta \bell_{t,d}(i)}
=
\sum_{i \in A} p_t(i) e^{-\eta \tell_{t,d}(i)}
\qquad\quad
\forall ~ A \in \cA_d
~,
\end{align*}
where $\tell_{t,d} = \bell_{t,0} + \sum_{h=0}^{d-1} \bell_{t,h}$.
\end{lemma}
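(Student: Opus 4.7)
The plan is to proceed by induction on $d$, exploiting two structural facts built into the definition of the estimators: (i) $\bell_{t,d}(i)$ depends on $i$ only through the subtree $A_d(i)$, so $\bell_{t,d}$ is piecewise constant on each element of $\cA_d$; and (ii) by inspection of the sums, $\tell_{t,d} - \tell_{t,d-1} = \bell_{t,d-1}$, i.e.\ the aggregated estimator telescopes across levels. Under the assumption $\sig_{t,d-1}=1$, the factor $1+\sig_{t,d-1}$ appearing in the recursion for $\bell_{t,d}$ equals $2$, and this doubling is exactly what will absorb the new summand into $\tell_{t,d}$ in the inductive step.

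The base case $d=0$ is immediate: $\cA_0$ consists of singletons, the empty sum gives $\tell_{t,0}=\bell_{t,0}$, and the hypothesis on the $\sig_{t,h}$ is vacuous, so both sides coincide term-by-term.

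For the inductive step, fix $A \in \cA_d$ and write its two children in $\cA_{d-1}$ as $A_1,A_2$. Since $\bell_{t,d}$ is constant on $A$, unfolding its recursive definition with $\sig_{t,d-1}=1$ yields
\begin{align*}
\sum_{i \in A} p_t(i) e^{-\eta \bell_{t,d}(i)}
\eq p_t(A)\cdot \sum_{j \in A} \frac{p_t(j)}{p_t(A)} e^{-2\eta\bell_{t,d-1}(j)}
\eq \sum_{j \in A} p_t(j) e^{-2\eta \bell_{t,d-1}(j)}.
\end{align*}
I would then split this sum over $A_1$ and $A_2$. On each $A_c$ the function $\bell_{t,d-1}$ is constant (say equal to $b_c$), so the $A_c$-piece reads $p_t(A_c) e^{-2\eta b_c}$. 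On the other hand, the inductive hypothesis at level $d-1$ (applicable because $\sig_{t,0}=\ldots=\sig_{t,d-2}=1$) gives
\begin{align*}
p_t(A_c) e^{-\eta b_c}
\eq \sum_{i \in A_c} p_t(i) e^{-\eta \bell_{t,d-1}(i)}
\eq \sum_{j \in A_c} p_t(j) e^{-\eta \tell_{t,d-1}(j)}.
\end{align*}
Multiplying both sides by $e^{-\eta b_c}$ converts the left side to $p_t(A_c) e^{-2\eta b_c}$ and, using $\tell_{t,d}(j)=\tell_{t,d-1}(j)+\bell_{t,d-1}(j)=\tell_{t,d-1}(j)+b_c$ for $j\in A_c$, converts the right side to $\sum_{j \in A_c} p_t(j) e^{-\eta \tell_{t,d}(j)}$. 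Summing the identity over $c\in\{1,2\}$ then recovers $\sum_{i \in A} p_t(i) e^{-\eta \tell_{t,d}(i)}$, completing the induction.

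The only place that requires care is step (i) at the child level: one needs that $\bell_{t,d-1}$ is constant on each $A_c\in\cA_{d-1}$ (not merely on $A\in\cA_d$), in order to pull out the factor $e^{-\eta b_c}$ and multiply the inductive equation by it. This is the structural feature that makes the doubling produced by $1+\sig_{t,d-1}=2$ match up precisely with the telescoping increment $\tell_{t,d}-\tell_{t,d-1}=\bell_{t,d-1}$; identifying this match is the main conceptual hurdle, and once it is pinpointed the calculation is routine.
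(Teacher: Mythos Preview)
Your proof is correct and follows essentially the same approach as the paper's: induction on $d$, splitting $A\in\cA_d$ into its two children in $\cA_{d-1}$, using that $\bell_{t,d-1}$ is constant on each child, and matching the doubling $1+\sig_{t,d-1}=2$ against the telescoping increment $\tell_{t,d}-\tell_{t,d-1}=\bell_{t,d-1}$. The only cosmetic difference is that the paper starts from the $\tell$-side and the $\bell$-side separately and shows both equal $\sum_{i\in A} p_t(i)e^{-2\eta\bell_{t,d-1}(i)}$, whereas you chain through that intermediate quantity in one pass; the content is identical.
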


\begin{proof}
The proof proceeds by induction on $d$. For the base case $d=0$, the claim follows trivially as $\bell_{t,0} = \tell_{t,0}$.
Next, we assume the claim is true for some value of $d \ge 0$ and prove it for $d+1$.
Pick any $A \in \cA_{d+1}$ and write $A = A_1 \cup A_2$ where $A_1,A_2$ are disjoint sets from $\cA_d$.
Notice that the vector $\bell_{t,d}$ is uniform over $A_1$ and $A_2$, namely $\bell_{t,d}(i) = c_{A_1}$ for all $i \in A_1$ for some $c_{A_1} \ge 0$, and similarly $\bell_{t,d}(i) = c_{A_2}$ for all $i \in A_2$ for some $c_{A_2} \ge 0$.
Hence, we have
\begin{align*}
\sum_{i \in A} p_t(i) e^{-\eta \tell_{t,d+1}(i)}
&=
\sum_{i \in A} p_t(i) e^{-\eta \tell_{t,d}(i)} e^{-\eta \bell_{t,d}(i)}
\\
&=
e^{-\eta c_{A_1}} \sum_{i \in A_1} p_t(i) e^{-\eta \tell_{t,d}(i)} + e^{-\eta c_{A_2}} \sum_{i \in A_2} p_t(i) e^{-\eta \tell_{t,d}(i)}
\\
&=
e^{-\eta c_{A_1}} \sum_{i \in A_1} p_t(i) e^{-\eta \bell_{t,d}(i)} + e^{-\eta c_{A_2}} \sum_{i \in A_2} p_t(i) e^{-\eta \bell_{t,d}(i)}
\\
&=
\sum_{i \in A} p_t(i) e^{-\eta \bell_{t,d}(i)} e^{-\eta \bell_{t,d}(i)}
\\
&=
\sum_{i \in A} p_t(i) e^{-2\eta \bell_{t,d}(i)}
~,
\end{align*}
where the third equality uses the induction hypothesis.
On the other hand, by the recursive definition of $\bell_{t,d+1}$ and the fact that $\bell_{t,d+1}$ is uniform over $A$, we have
\begin{align*}
\sum_{i \in A} p_t(i) e^{-\eta \bell_{t,d+1}(i)}
=
p_t(A) \sum_{i \in A} \frac{p_t(i)}{p_t(A)} e^{-\eta (1+\sig_{t,d}) \bell_{t,d}(i)}
=
\sum_{i \in A} p_t(i) e^{-2\eta \bell_{t,d}(i)}
~.
\end{align*}
Combining both observations, we obtain
\begin{align*}
\sum_{i \in A} p_t(i) e^{-\eta \bell_{t,d+1}(i)}
=
\sum_{i \in A} p_t(i) e^{-\eta \tell_{t,d+1}(i)}
\end{align*}
which concludes the inductive argument.
\end{proof}

We can now prove \cref{lem:movement}.

\begin{proof}[Proof of \cref{lem:movement}]
It is enough to prove that $p_{t+1}(A) = p_t(A)$ for all $A \in \cA_{d_t}$, as each set in $\cA_{d}$ for $d>d_t$ is a disjoint union of sets from $\cA_{d_t}$.

Observe that if $i_t \in B_t$ (see \cref{alg:alg1} for the definition of $B_t$) then $\tell_t = 0$ and the claim is certainly true as $p_{t+1} = p_t$ in this case. 
Thus, we henceforth assume that $i_t \notin B_t$, in
which case $\tell_t = \tell_{t,d_d} - \bell_{t,d_d}$ where $\tell_{t,d_t}
= \bell_{t,0} + \sum_{h=0}^{d_t-1} \bell_{t,h}$ (recall \cref{eq:tell-equiv}).
Now, pick any $A \in \cA_{d_t}$ and
$j \in A$. Since $\bell_{t,d_t}(i) = c_A$ for all $i \in A$ for some
$c_A \ge 0$, and using  \cref{lem:balance} we obtain
\begin{align} \label{eq:induc}
e^{-\eta c_A}
=
\sum_{i \in A} \frac{p_t(i)}{p_t(A)} e^{-\eta \bell_{t,d_t}(i)}
=
\sum_{i \in A} \frac{p_t(i)}{p_t(A)} e^{-\eta \tell_{t,d_t}(i)}
~.
\end{align}
%
On the other hand, from $\tell_t = \tell_{t,d_t} - \bell_{t,d_t}$ it follows that $e^{-\eta \tell_t(i)} = e^{-\eta \tell_{t,d_t}(i)} / e^{-\eta c_A}$ for all $i \in A$, and by \cref{eq:induc} we have
\begin{align*}
\sum_{i \in A} p_t(i) e^{-\eta \tell_t(i)}
=
\frac{\sum_{i \in A} p_t(i) e^{-\eta \tell_{t,d_t}(i)}}{e^{-\eta c_A}}
=
p_t(A)
~.
\end{align*}
In words, the multiplicative update does not change the probabilities of the sets in $\cA_{d_t}$, hence $p_{t+1}(A) = p_t(A)$ for all $A \in \cA_{d_t}$ as required.
\end{proof}

\subsection{Lazy sampling}

Our next step is to show that the sampling scheme employed by \cref{alg:alg1} is valid and gives rise to low movement costs on expectation.
Specifically, we would like to show that in a certain sense, the action $i_t$ on round $t$ is distributed in expectation according to the distribution $p_t$, even though it is sampled from a conditional of $p_t$ in a way that is highly correlated with the preceding action $i_{t-1}$.
Furthermore, we will show that the correlations in the sampling scheme are designed in a way that the expected movement between consecutive actions is small.
These properties are formalized in the following lemma.

\begin{lemma} \label{lem:sampling}
For all $t$ and $0 \le d < D$ the following hold:
\begin{itemize}
\item for all $A \in \cA_d$ we have
\begin{align} \label{eq:property}
\EE{\frac{\ind{i_t\in A}}{p_t(A)}}=1 ~;
\end{align}
\item with probability at least $1-2^{-(d+1)}$, we have that $A_d(i_t)=A_d(i_{t-1})$.
\end{itemize}
\end{lemma}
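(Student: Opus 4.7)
The plan is to establish the two properties separately: the second is an almost immediate consequence of the sampling rule together with the distribution of $d_{t-1}$, while the first requires an induction on $t$ that leans crucially on \cref{lem:movement} to propagate the invariant through the multiplicative update.

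For the second bullet, observe that by step~(1) of \cref{alg:alg1} we have $i_t \in A_{d_{t-1}}(i_{t-1})$ almost surely. If $d_{t-1} \le d$ then $A_{d_{t-1}}(i_{t-1}) \subseteq A_d(i_{t-1})$, and the inclusion $i_t \in A_d(i_{t-1})$ is equivalent to $A_d(i_t)=A_d(i_{t-1})$. It remains to bound $\Pr[d_{t-1} > d]$. Since $d_{t-1}$ is the position of the first negative sign among $\sig_{t-1,0},\ldots,\sig_{t-1,D-1}$ with $\sig_{t-1,D} = -1$ by convention, the event $\{d_{t-1} > d\}$ coincides with $\{\sig_{t-1,0}=\cdots=\sig_{t-1,d}=+1\}$, which has probability $2^{-(d+1)}$.

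For the first bullet, I will induct on $t$. The base case $t=1$ is trivial since $d_0=D$ forces $A_{d_0}(i_0)=K$ and hence $i_1 \sim p_1$. For the inductive step, fix $A \in \cA_d$ and condition on the $\sigma$-algebra $\cF_t$ generated by the history through round $t$, which determines $i_t$, $d_t$, and $p_{t+1}$. Since $i_{t+1}$ is sampled from $p_{t+1}(\cdot \mid A_{d_t}(i_t))$, we have $\E[\ind{i_{t+1}\in A}\mid\cF_t] = p_{t+1}(A\cap A_{d_t}(i_t))/p_{t+1}(A_{d_t}(i_t))$. I will split on whether $d_t\le d$ or $d_t > d$. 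If $d_t \le d$, then $A$ is a disjoint union of sets in $\cA_{d_t}$, so $A\cap A_{d_t}(i_t)$ equals $A_{d_t}(i_t)$ when $i_t\in A$ and is empty otherwise; hence $\ind{i_{t+1}\in A}=\ind{i_t\in A}$ conditionally, and by \cref{lem:movement} the denominator $p_{t+1}(A)$ coincides with $p_t(A)$. If $d_t > d$, then $A$ is contained in a unique $B\in\cA_{d_t}$, which equals $A_{d_t}(i_t)$ exactly when $i_t\in B$; therefore $\E[\ind{i_{t+1}\in A}\mid\cF_t] = \ind{i_t\in B}\, p_{t+1}(A)/p_{t+1}(B)$, and \cref{lem:movement} gives $p_{t+1}(B)=p_t(B)$.

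Dividing by $p_{t+1}(A)$, the first case yields $\ind{i_t\in A}/p_t(A)$ and the second yields $\ind{i_t\in B}/p_t(B)$, where $B\in\cA_{d_t}$ (and if $d_t=D$ then $B=K$, making the ratio identically $1$). Now I will take outer expectations. Because $d_t$ is sampled in step~(2) \emph{after} $i_t$ and is independent of $(i_t, p_t)$, conditioning on $d_t$ does not affect the joint law of $(i_t, p_t)$, and the induction hypothesis applied at the relevant subtree level gives expectation $1$ in both cases. Averaging over $d_t$ preserves this, completing the inductive step.

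The main obstacle is keeping the conditional structure straight: the estimator $\tell_t$ and hence $p_{t+1}$ are built using $d_t$, so $p_{t+1}$ is not independent of $d_t$, and one must be careful that the replacement of $p_{t+1}(\cdot)$ by $p_t(\cdot)$ afforded by \cref{lem:movement} is valid pathwise, not just in expectation. Once this is done, the independence of $d_t$ from $(i_t,p_t)$ decouples the two pieces and lets the induction close cleanly.
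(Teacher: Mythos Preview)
Your proof is correct and follows essentially the same route as the paper: induction on $t$ with a case split on whether $d_t \le d$ or $d_t > d$, invoking \cref{lem:movement} pathwise to replace $p_{t+1}$ by $p_t$ on the relevant subtree, and then using the independence of $d_t$ from $(i_t,p_t)$ to reduce to the induction hypothesis. Your presentation via conditioning on the full history $\cF_t$ is slightly more careful about measurability than the paper's, but the structure and the key steps are the same.
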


\cref{eq:property} is central to our analysis below, and virtually all of our probabilistic arguments involving the random variables $i_t$ and $p_t$ will be based on this property.
We remark that if we were to sample $i_t$ directly from the distribution specified by $p_t$, then \cref{eq:property} would have been trivially true. However, the $i_t$ are sampled from a conditional of $p_t$ that might be very different from $p_t$ itself; nevertheless, the lemma shows that \cref{eq:property} still continues to hold under the skewed sampling process.

\cref{lem:sampling} also implies the slow-movement property of the algorithm: at the high levels of the tree, where the subtrees are ``wide'', the actions $i_t$ and $i_{t-1}$ are very likely to belong to the same subtree.
The probability of switching subtrees increases exponentially with the level in the tree: at the lower levels, where the subtrees are ``narrow'', subtree switches may occur more often as the movement cost incurred by such switches is low.


\begin{proof}[Proof of \cref{lem:sampling}]
The second statement is true since we pick $i_{t+1} \sim p_t(i \mid A_{d_{t}}(i_{t}))$, so that $A_d(i_{t+1}) \ne A_d(i_{t})$ can occur only if $d<d_{t}$.
This happens
with probability $2^{-(d+1)}$.

Next, we show \cref{eq:property} by induction on $t$.
For $t=1$ the statement is true since $i_1 \sim p_1$.
For the induction step, condition on $d_t$ and fix any $d \ge d_t$ and $A \in \cA_d$. By \cref{lem:movement} we have that $p_t(A)=p_{t+1}(A)$. Also $i_t\in A$ if and only if $i_{t+1}\in A$, since $d \ge d_t$ implies that $i_t\in A$ if and only if $A_{d_t}(i_t)\subseteq A$ and $A_{d_t}(i_{t+1})=A_{d_t}(i_{t})$.
Hence, we have
\begin{align} \label{eq:larged}
\EE{\frac{\ind{i_{t+1}\in A}}{p_{t+1}(A)} \;\middle|\;  d_t}
=
\EE{\frac{\ind{i_t\in A}}{p_t(A)}\;\middle|\;  d_t}
=
\EE{\frac{\ind{i_t\in A}}{p_t(A)}}
=
1
~,
\end{align}
where the last equality holds true since $d_t$ depends solely on $\sig_{t,0},\ldots,\sig_{t,D-1}$ which are independent of $i_t$ and $p_t$ (note that this equality then holds for \emph{any} set $A$, regardless of the fact that $A \in \cA_d$).

Next, we consider any $d < d_t$ and $A \in \cA_d$. Let $A' \in \mathcal{A}_{d_t}$ be the subtree such that $A\subseteq A'$, and recall that $i_{t+1}\sim p_{t+1}(i \mid A_{d_t}(i_t))$. Hence,
\begin{align} \label{eq:in}
\EE{ \frac{\ind{i_{t+1}\in A}}{p_{t+1}(A)} \;\middle|\; i_t \in A',p_{t+1},d_t }
=
\EE{\frac{\ind{i_{t+1} \in A}}{p_{t+1}(A \mid A')p_{t+1}(A')} \;\middle|\; i_t \in A',p_{t+1},d_t }
=
\frac{1}{p_{t+1}(A')}~.
\end{align}
Since $i_t \in A'$ implies that $i_{t+1}\in A'$, we have
\begin{align} \label{eq:out}
\EE{\frac{\ind{i_{t+1}\in A}}{p_{t+1}(A)}\;\middle|\; d_t,p_{t+1}}
=
\EE{ \ind{i_{t+1}\in A'} \cdot \EE{ \frac{\ind{i_{t+1}\in A}}{p_{t+1}(A)}\;\middle|\; i_{t} \in A',p_{t+1},d_t } \;\middle|\;d_t,p_{t+1} }
~.
\end{align}
Taking \cref{eq:in,eq:out} together and taking the expectation over $p_{t+1}$, we obtain that for every $d<d_t$:
\begin{align*}
\EE{\frac{\ind{i_{t+1}\in A}}{p_{t+1}(A)}\;\middle|\; d_t}
=
\EE{\frac{\ind{i_{t+1}\in A'}}{p_{t+1}(A')}\;\middle|\;d_t}=1
~,
\end{align*}
where last equality follows from \cref{eq:larged} as $A' \in \cA_{d_t}$.

To conclude, we showed that for all $d$ we have:
\begin{align*}
\EE{\frac{\ind{i_{t+1}\in A}}{p_{t+1}(A)}\;\middle|\; d_t}
=
1
~.
\end{align*}
Taking the expectation over $d_t$, we obtain the desired result.
\end{proof}

\subsection{Bounding the bias and variance}


Next, we turn to bound the variance of the loss estimates $\tell_t$ and the bias of their expectations from the true loss vectors.
These bounds would become useful for controlling the expected regret of the underlying multiplicative updates scheme.

We begin with analyzing the bias of our estimator.
The following lemma shows that our estimates are ``optimistic'', in the sense that they always bound the true losses from below, yet they do not overly underestimate the losses incurred by the algorithm.
The proof is somewhat involved, as a result of the ``bad events'' $B_t$ under which the estimated loss vectors $\tell_t$ are being zeroed-out, thereby introducing biases into the estimation.

\begin{lemma} \label{lem:unbiased}
For all $t$, we have $\E[\tell_t(i)] \le \ell_t(i)$ and $\E[\ell_t(i_t)] \le\E[p_t \cdot \tell_t] + \eta k\log_2{k}$.
\end{lemma}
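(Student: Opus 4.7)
The plan is to condition on the history $\mathcal{H}_t$ containing everything generated through the draw of $i_t$ (so in particular $p_t$ and the indicator $\ind{B_t}$ are $\mathcal{H}_t$-measurable), and then exploit two separate sources of cancellation: the mean-zero Rademacher signs $\sig_{t,d}$ and the sampling identity from \cref{lem:sampling}. The key structural observation is that by the recursive definition of the balancing vectors, $\bell_{t,d}$ depends on $\sig_{t,0},\ldots,\sig_{t,d-1}$ but is independent of $\sig_{t,d},\ldots,\sig_{t,D-1}$; iteratively integrating out the signs from $d=D-1$ downward (a telescoping conditional expectation) collapses each cross term $\sig_{t,d}\bell_{t,d}$ to zero. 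On the event $B_t$ the estimator $\tell_t$ vanishes and both inequalities hold trivially, so throughout we focus on $B_t^c$.

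For the first inequality, applying the iterated sign-cancellation to $\tell_t(i) = \bell_{t,0}(i) + \sum_d \sig_{t,d}\bell_{t,d}(i)$ gives $\E[\tell_t(i)\mid \mathcal{H}_t]\ind{B_t^c} = \bell_{t,0}(i)\ind{B_t^c}$. Since $\bell_{t,0}(i)$ is non-negative, dropping the indicator only inflates the expectation, yielding $\E[\tell_t(i)] \le \E[\bell_{t,0}(i)] = \ell_t(i)\cdot\E[\ind{i_t=i}/p_t(i)] = \ell_t(i)$, where the last step applies \cref{lem:sampling} to the singleton $\{i\}\in\cA_0$. For the second inequality, the same cancellation applied to the inner product, together with the identity $p_t\cdot\bell_{t,0} = \ell_t(i_t)$, gives $\E[(p_t\cdot\tell_t)\ind{B_t^c}] = \E[\ell_t(i_t)\ind{B_t^c}]$, and since $\tell_t=0$ on $B_t$ this rearranges to
\begin{align*}
\E[\ell_t(i_t)] - \E[p_t\cdot\tell_t] = \E[\ell_t(i_t)\ind{B_t}] \le \Pr(B_t),
\end{align*}
using $\ell_t\in[0,1]$.

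The remaining task is to show $\Pr(B_t) \le \eta k\log_2 k$. Decomposing $B_t = \bigcup_{d=0}^{D-1}\{p_t(A_d(i_t)) < 2^d\eta\}$ by a union bound, the crucial ``multiply-and-divide'' trick for each level $d$ and each $A\in\cA_d$ is
\begin{align*}
\ind{i_t\in A}\,\ind{p_t(A) < 2^d\eta} = \frac{\ind{i_t\in A}}{p_t(A)}\cdot p_t(A)\,\ind{p_t(A) < 2^d\eta} \le 2^d\eta\cdot\frac{\ind{i_t\in A}}{p_t(A)},
\end{align*}
so taking expectations and invoking \cref{lem:sampling} gives $\Pr(i_t\in A,\ p_t(A)<2^d\eta) \le 2^d\eta$ per set. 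Summing over the $k/2^d$ sets in $\cA_d$ yields $\Pr(B_{t,d}) \le k\eta$, and a union bound across the $D = \log_2 k$ levels completes the estimate. The main obstacle will be executing the sign-cancellation cleanly: because each $\bell_{t,d}$ is itself a non-trivial function of the lower-indexed signs, the argument must be phrased as a telescoping tower-property computation rather than a naive pairwise independence claim, and the role of the ``trimming'' event $B_t$ must be tracked consistently through both inequalities so that the bias is absorbed into the $\Pr(B_t)$ term rather than the variance.
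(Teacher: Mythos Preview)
Your proposal is correct and follows essentially the same approach as the paper's proof: the paper likewise splits on the event $B_t$, uses the independence of $\sig_{t,d}$ from $(i_t,\bell_{t,d})$ to collapse the balancing terms in expectation, and bounds $\Pr(B_t)$ level-by-level via the sampling identity (phrased there as Markov's inequality on $\E[1/p_t(A_d(i_t))]=k/2^d$, which is equivalent to your multiply-and-divide trick). One minor simplification: the sign-cancellation need not be organized as a telescope, since for each $d$ the single independence $\sig_{t,d}\perp(\mathcal{H}_t,\sig_{t,0},\ldots,\sig_{t,d-1})$ already gives $\E[\sig_{t,d}\bell_{t,d}(i)\ind{B_t^c}]=0$ directly.
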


\begin{proof}
Observe that, by \cref{eq:property} of \cref{lem:sampling},
\begin{align*}
\E[\bell_{t,0}(i)]
=
\ell_t(i) \, \EE{ \frac{\ind{i_t = i}}{p_t(i)} }
=
\ell_t(i)
~.
\end{align*}
We now prove that $\E[\tell_{t,0}(i)] \le \E[\bell_{t,0}(i)]$ for all $i$, which would imply the first claim.
Denote $B_t = \lrset{i \mid p_t(A_d(i)) < 2^d\eta ~ \text{for some} ~ 0 \le d < D}$.
Then, by construction we have
$
\E[\tell_t(i) \mid i_t \in B_t]
=
0
\le
\E[\bell_{t,0}(i) \mid i_t \in B_t]
.
$
Also, since $\E[\sig_{t,d}] = 0$ and $\sig_{t,d}$ is independent of $i_t$ and $\bell_{t,d}$ (the latter only depends on $\sig_{t,0},\ldots,\sig_{t,d-1}$), we have
\begin{align} \label{eq:notBt}
\E[\tell_t \mid i_t \notin B_t]
=
\E[\bell_{t,0} \mid i_t \notin B_t] + \sum_{d=0}^{D-1} \E[\sig_{t,d}] \, \E[\bell_{t,d} \mid i_t \notin B_t]
=
\E[\bell_{t,0} \mid i_t \notin B_t]
~.
\end{align}
Together, we obtain $\E[\tell_{t,0}(i)] \le \E[\bell_{t,0}(i)]$ as required.

Next, to bound $\E[\ell_t(i_t)]$ observe that $\E[p_t \cdot \tell_t \mid i_t \in B_t] = 0$ and, similarly to \cref{eq:notBt},
\begin{align*}
\E[p_t \cdot \tell_t \mid i_t \notin B_t]
=
\E[p_t \cdot \bell_{t,0} \mid i_t \notin B_t]
=
\E[\ell_t(i_t) \mid i_t \notin B_t]
~.
\end{align*}
Denote $\beta_t= \Pr\left[i_t\in B_t\right]$.
Then
\begin{align*}
\E[\ell_t(i_t)]
&=
\beta_t \E[\ell_t(i_t) \mid i_t \in B_t] + (1-\beta_t) \E[\ell_t(i_t) \mid i_t \notin B_t]
\\
&\le
\beta_t + (1-\beta_t) \E[p_t \cdot \tell_t \mid i_t \notin B_t]
\\
&=
\beta_t + \E[p_t \cdot \tell_t]
~,
\end{align*}
where for the inequality we used the fact that $\ell_t(i_t) \le 1$.

To complete the proof, we have to show that $\beta_t \le \eta k\log_2{k}$.
To this end, write
\begin{align*}
\Pr[i_t \in B_t]
\le
\sum_{d=0}^{D-1} \Pr[p_t(A_d(i_t)) < 2^d\eta]
~.
\end{align*}
Using \cref{eq:property} to write
\begin{align*}
\EE{ \frac{1}{p_t(A_d(i_t))} }
=
\sum_{i=1}^k \frac{1}{\abs{A_d(i)}} \EE{ \frac{\ind{i_t \in A_d(i)}}{p_t(A_d(i))} }
=
\sum_{i=1}^k \frac{1}{\abs{A_d(i)}}
=
\abs{\cA_d}
=
\frac{k}{2^d}
\end{align*}
together with Markov's inequality, we obtain
\begin{align*}
\Pr\!\lrbra{ p_t(A_d(i_t)) < 2^d \eta }
=
\Pr\!\lrbra{ \frac{1}{p_t(A_d(i_t))} > \frac{1}{2^d \eta} }
\le
\frac{k}{2^d} \cdot 2^d \eta
=
k\eta
~.
\end{align*}
We conclude that $\beta_t = \Pr[i_t \in B_t] \le \eta k\log_2{k}$, as required.
\end{proof}

Our next step is to bound the relevant variance term of the estimator $\tell_t$.

\begin{lemma} \label{lem:variance}
For all $t$, we have $\E[p_t \cdot \tell_t^2] \le 2k\log_2{k}$.
\end{lemma}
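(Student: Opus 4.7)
\textbf{Proof plan for Lemma~\ref{lem:variance}.}
The first step is to restrict attention to the event $i_t \notin B_t$, on which $\tell_t$ has the non-trivial form $\tell_t = (1+\sig_{t,0})\bell_{t,0} + \sum_{d=1}^{D-1} \sig_{t,d}\bell_{t,d}$ (grouping the leading $\bell_{t,0}$ with the $d=0$ term of the sum); on the complementary event $\tell_t = 0$ and there is nothing to bound. Note that $i_t$, $p_t$, and $\ind{i_t \notin B_t}$ are all determined before the random signs $\sig_{t,0},\ldots,\sig_{t,D-1}$ are drawn, so these signs are independent of those quantities. Moreover, by the recursive construction, $\bell_{t,d}$ depends only on $\sig_{t,0},\ldots,\sig_{t,d-1}$, so it is independent of $\sig_{t,d}$ for the same $d$ and of all higher-indexed signs.

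The second step is to expand $\tell_t^2$ and take expectation, using these independence properties to kill all cross terms. For the mixed product $2(1+\sig_{t,0})\bell_{t,0}\cdot \sig_{t,d}\bell_{t,d}$ with $d \ge 1$, the sign $\sig_{t,d}$ is independent of everything else in the expression and has mean zero. For the off-diagonal square terms $\sig_{t,d}\sig_{t,d'}\bell_{t,d}\bell_{t,d'}$ with $d < d'$, the higher-indexed sign $\sig_{t,d'}$ is again independent of the remaining factors and has mean zero. Using also $\E[(1+\sig_{t,0})^2]=2$ and $\sig_{t,d}^2=1$, we obtain
\begin{align*}
\E[p_t \cdot \tell_t^2] \;\le\; 2\,\E[p_t \cdot \bell_{t,0}^2] + \sum_{d=1}^{D-1} \E[p_t \cdot \bell_{t,d}^2].
\end{align*}

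The third step is to bound each term by $k$. For $d=0$, $p_t \cdot \bell_{t,0}^2 = \ell_t(i_t)^2/p_t(i_t) \le 1/p_t(i_t)$, and Lemma~\ref{lem:sampling} (applied with $A=\{i\}$ for each $i$) gives $\E[1/p_t(i_t)] = \sum_{i} \E[\ind{i_t=i}/p_t(i)] = k$. For $d \ge 1$, observe that by the recursive definition $\bell_{t,d}$ is constant on every subtree $A \in \cA_d$, so $p_t \cdot \bell_{t,d}^2 = p_t(A_d(i_t)) \cdot (\bell_{t,d}|_{A_d(i_t)})^2$; combining this with Lemma~\ref{lem:bell} yields
\begin{align*}
p_t \cdot \bell_{t,d}^2 \;\le\; \frac{\bigl(\prod_{h<d}(1+\sig_{t,h})\bigr)^2}{p_t(A_d(i_t))}.
\end{align*}
Taking expectation and using independence of the signs from $(i_t,p_t)$ together with Lemma~\ref{lem:sampling}, which gives $\E[1/p_t(A_d(i_t))] = \sum_{A\in\cA_d}\E[\ind{i_t\in A}/p_t(A)] = k/2^d$, and the elementary computation $\E[(\prod_{h<d}(1+\sig_{t,h}))^2] = \prod_{h<d}\E[(1+\sig_{t,h})^2] = 2^d$, we get $\E[p_t \cdot \bell_{t,d}^2] \le 2^d \cdot (k/2^d) = k$.

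Summing gives $\E[p_t \cdot \tell_t^2] \le 2k + (D-1)k = (D+1)k \le 2Dk = 2k\log_2 k$ for $D\ge 1$. The most delicate step is the cross-term cancellation: it requires that the signs $\sig_{t,d}$ be drawn \emph{after} $i_t$ and that $\bell_{t,d}$ depend only on the lower-indexed signs, so that the relevant conditional expectations factor. The apparent blow-up of $\bell_{t,d}$ by $2^d$ on the rare event $\prod(1+\sig_{t,h})=2^d$ is exactly compensated by the $2^{-d}$ scaling of $\E[1/p_t(A_d(i_t))]$, which is what produces the clean per-level bound of $k$.
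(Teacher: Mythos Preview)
Your proposal is correct and follows essentially the same approach as the paper's proof: bound $\tell_t^2$ by the square of the full signed sum, use the independence structure of the $\sig_{t,d}$ to kill the cross terms, and then bound each $\E[p_t\cdot\bell_{t,d}^2]$ by $k$ via \cref{lem:bell} and \cref{lem:sampling}. Your grouping of $\bell_{t,0}+\sig_{t,0}\bell_{t,0}$ into $(1+\sig_{t,0})\bell_{t,0}$ is a slightly cleaner bookkeeping than the paper's, and even yields the marginally sharper intermediate bound $(D+1)k$ rather than $2Dk$, but otherwise the arguments coincide.
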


\begin{proof}
Observe that
\begin{align*}
\tell_t^2(i)
\le
\lr{ \bell_{t,0}(i) + \sum_{d=0}^{D-1} \sig_{t,d} \bell_{t,d}(i) }^2
~.
\end{align*}
Since $\E[\sig_{t,d}] = 0$ and $\E[\sig_{t,d} \sig_{t,d'}] = 0$ for all $d \ne d'$, we have for all $i$ that
\begin{align} \label{eq:var1}
\E[\tell_t^2(i)]
=
\E[\tell_{t,0}^2(i)] +
\sum_{d=0}^{D-1} \E[ \bell_{t,d}^2(i) ]
\le
2\sum_{d=0}^{D-1} \E[ \bell_{t,d}^2(i) ]
~.
\end{align}
On the other hand, for all $d$ we have by \cref{lem:bell} that
\begin{align*}
p_t \cdot \bell_{t,d}^2
&\le
\frac{\sum_{i=1}^k p_t(i) \ind{i_t \in A_d(i)}}{p_t(A_d(i_t))^2} \prod_{h=0}^{d-1} (1+\sig_{t,h})^2
\\
&=
\frac{1}{p_t(A_d(i_t))} \prod_{h=0}^{d-1} (1+\sig_{t,h})^2
\\
&=
\sum_{i=1}^k \frac{1}{\abs{A_d(i)}} \frac{\ind{i_t \in A_d(i)}}{p_t(A_d(i))} \prod_{h=0}^{d-1} (1+\sig_{t,h})^2
~.
\end{align*}
Since $i_t$ is independent of the $\sig_{t,h}$, and recalling \cref{eq:property}, we get
\begin{align*}
\E_t[p_t \cdot \bell_{t,d}^2]
\le
\sum_{i=1}^k \frac{1}{\abs{A_d(i)}} \EE{ \frac{\ind{i_t \in A_d(i)}}{p_t(A_d(i))} } \prod_{h=0}^{d-1} \E[(1+\sig_{t,h})^2]
=
\sum_{i=1}^k \frac{2^d}{\abs{A_d(i)}}
=
2^d \abs{\cA_d}
=
k
~.
\end{align*}
Together with \cref{eq:var1}, this gives
\begin{align*}
\E[p_t \cdot \tell_t^2]
\le
2\sum_{d=0}^{D-1} \E[p_t \cdot \bell_{t,d}^2]
\le
2k\log_2{k}
&.\qedhere
\end{align*}
\end{proof}

\subsection{Concluding the proof}

To conclude the proof and obtain a regret bound, we will use the following well-known second-order regret bound for the multiplicative weights (MW) method, essentially due to \cite{cesa2007improved} (see also \cite{alon2015online} for the version given here).
For completeness, we give a proof of this bound in \cref{sec:mabproofs} below.

\begin{lemma}[Second-order regret bound for MW] \label{lem:mw2}
Let $\eta > 0$ and let $c_1,\ldots,c_T \in \reals^k$ be real vectors such that $c_t(i) \ge -1/\eta$ for all $t$ and $i$.
Consider a sequence of probability vectors $q_1,\ldots,q_T \in \Delta_k$ defined by $q_1 = (\tfrac{1}{k},\ldots,\tfrac{1}{k})$, and for all $t > 1$:
\begin{align*}
q_{t+1}(i) = \frac{ q_t(i) \, e^{-\eta c_t(i)} }{ \sum_{j=1}^k
q_t(j) \, e^{-\eta c_t(j)} } \qquad \forall ~ i \in [k] ~.
\end{align*}
Then, for all $i^* \in [k]$ we have that
\begin{align*}
\sum_{t=1}^T q_t \cdot c_t - \sum_{t=1}^T c_t(i^*)
\le
\frac{\log{k}}{\eta} + \eta \sum_{t=1}^T q_t \cdot c_t^2
~.
\end{align*}
\end{lemma}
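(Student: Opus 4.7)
The plan is to prove this via the standard potential-function (a.k.a. weighted-majority / Hedge) argument, tracking the normalizing constant of the exponential weights. Concretely, let $L_t(i) = \sum_{s=1}^t c_s(i)$ with $L_0 \equiv 0$, and define the potential
\[
W_t \eqdef \sum_{i=1}^k q_1(i) \, e^{-\eta L_{t-1}(i)},
\]
so that $W_1 = 1$ and $q_t(i) = q_1(i) e^{-\eta L_{t-1}(i)} / W_t$ by unrolling the multiplicative update. The strategy is to sandwich $W_{T+1}$ from above via a per-round telescoping bound and from below via the contribution of the benchmark $i^\st$.

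For the lower bound, I simply write $W_{T+1} \ge q_1(i^\st) e^{-\eta L_T(i^\st)} = \tfrac{1}{k} e^{-\eta L_T(i^\st)}$, so $\log W_{T+1} \ge -\log k - \eta \sum_t c_t(i^\st)$. For the upper bound, I note that
\[
\frac{W_{t+1}}{W_t} = \sum_{i=1}^k q_t(i) \, e^{-\eta c_t(i)},
\]
and apply the elementary inequality $e^{-x} \le 1 - x + x^2$ valid for all $x \le 1$ to each term. This yields $W_{t+1}/W_t \le 1 - \eta\, q_t\cdot c_t + \eta^2\, q_t \cdot c_t^2$. Taking $\log$, using $\log(1+y) \le y$, and telescoping over $t=1,\ldots,T$ gives
\[
\log W_{T+1} \le -\eta \sum_{t=1}^T q_t \cdot c_t + \eta^2 \sum_{t=1}^T q_t \cdot c_t^2.
\]
Chaining this with the lower bound and dividing by $\eta$ yields the claim.

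The only subtle point — and really the only place the hypothesis enters — is justifying the use of $e^{-x} \le 1 - x + x^2$: this inequality requires $x \le 1$. Here $x = \eta c_t(i)$, so the condition $c_t(i) \ge -1/\eta$ is exactly what is needed so that $-\eta c_t(i) \le 1$ and the bound applies uniformly in $i$ and $t$. (Note that no upper bound on $c_t(i)$ is required, which is important in our bandit application where $\tell_t$ may be large and positive.) Once this is in place everything else is a routine telescoping computation, so I do not anticipate any further obstacle. If a cleaner presentation is desired, one can write $\Phi_t = -\tfrac{1}{\eta}\log W_t$ and carry out the same argument in terms of potential differences $\Phi_{t+1} - \Phi_t$, but the content is identical.
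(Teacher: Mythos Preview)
Your proof is correct and follows essentially the same potential-function argument as the paper's own proof. One minor slip: the inequality $e^{-x} \le 1 - x + x^2$ is valid for $x \ge -1$ (equivalently $e^{y} \le 1 + y + y^2$ for $y \le 1$), not for ``$x \le 1$'' as you wrote---but your verification that $-\eta c_t(i) \le 1$ is exactly the right condition, so the argument goes through unchanged.
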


We now have all we need in order to prove our main result.


\begin{proof}[Proof of \cref{thm:main}]
First, we bound the expected movement cost.
\cref{lem:sampling} says that with probability at least $1-2^{-(d+1)}$, the actions $i_t$ and $i_{t-1}$ belong to the same subtree on level $d$ of the tree, which means that $\Delta(i_t,i_{t-1}) \le 2^d/k$ with the same probability.
Hence,
\begin{align*}
\E[\Delta(i_t,i_{t-1})]
\le
\sum_{d=0}^{D-1} \frac{2^d}{k} \Pr\lrbra{\Delta(i_t,i_{t-1}) > \frac{2^d}{k}}
\le
\sum_{d=0}^{D-1} \frac{1}{2k}
=
\frac{\log_2{k}}{2k}
~,
\end{align*}
and the cumulative movement cost is then $\O((T/k)\log{k})$.

We turn to analyze the cumulative loss of the algorithm.
We begin by observing that $\tell_t(i) \ge -1/\eta$ for all $t$ and $i$.
To see this, notice that $\tell_t = 0$ unless $i_t \notin B_t$, in which case we have, by \cref{lem:bell} and the definition of $B_t$,
\begin{align*}
0
\le
\bell_{t,d}(i)
\le
\frac{2^d}{p_t(A_d(i_t))}
\le
\frac{1}{\eta}
\qquad\quad
\forall ~ 0 \le d < D
~,
\end{align*}
and since $\tell_t$ has the form $\tell_t = \bell_{t,0} + \sum_{h=0}^{d_t-1} \bell_{t,h} - \bell_{t,d_t}$ (recall \cref{eq:tell-equiv}), we see that $\tell_t(i) \ge -1/\eta$.
Hence, we can use second-order bound of \cref{lem:mw2} on the vectors $\tell_t$ to obtain
\begin{align*} 
\sum_{t=1}^T p_t \cdot \tell_t - \sum_{t=1}^T \tell_t(i^*)
\le
\frac{\log k}{\eta} + \eta \sum_{t=1}^T p_t \cdot \tell_t^2
\end{align*}
for any fixed $i^* \in [k]$.
Taking expectations and using \cref{lem:unbiased,lem:variance}, we have
\begin{align*}
\EE{ \sum_{t=1}^T \ell_t(i_t) } - \sum_{t=1}^T \ell_t(i^*)
\le
\frac{\log_2 k}{\eta} + 2\eta T k\log_2{k}
~.
\end{align*}
Choosing $\eta = 1/\sqrt{Tk}$, we get a regret bound of $\O(\sqrt{Tk}\log{k})$.
\end{proof}

\subsection{Additional technical proofs}
\label{sec:mabproofs}

Here we give a proof of our technical lemma bounding the magnitude of the balancing terms $\bell_{t,d}$.

\begin{proof}[Proof of \cref{lem:bell}]
We will prove the claim by induction on $d$.
For the base case $d=0$, \cref{eq:bell1} follows directly from our definitions and the fact that $0 \le \ell_t(i) \le 1$ for all $i$.
Next, we prove that \cref{eq:bell1} holds for some $d$ assuming it hold for all $d' < d$.
Since $(1+\sig_{t,d-1})\bell_{t,d-1}(i) \ge 0$ for all $i$ by the induction hypothesis, the recursive definition of $\bell_{t,d}$ implies that
\begin{align*}
\bell_{t,d}(i)
\ge
-\frac{1}{\eta} \log\lrBigg{ \sum_{j \in A_{d}(i)} \frac{p_t(j)}{p_t(A_{d}(j))} }
=
0
~.
\end{align*}
Furthermore, the definition of $\bell_{t,d}$ together with the convexity of $-\log{x}$ and Jensen's inequality give
\begin{align*}
\bell_{t,d}(i)
&\le
(1+\sigma_{d-1})\sum_{j \in A_{d}(i)} \frac{p_t(j)}{p_t(A_{d}(j))} \bell_{t,d-1}(j)
\\
&\le
\frac{\ind{i_t \in A_d(i)}}{p_t(A_{d}(i))} \sum_{j \in A_{d-1}(i)} \frac{p_t(j)}{p_t(A_{d-1}(j))} \prod_{h=0}^{d-1} (1+\sig_{t,h})
\\
&=
\frac{\ind{i_t \in A_d(i)}}{p_t(A_{d}(i))} \prod_{h=1}^{d-1} (1+\sig_{t,h})
~,
\end{align*}
where in the second inequality we used the induction hypothesis.
This concludes the inductive argument.
\end{proof}

Finally, for completeness, we give a proof of \cref{lem:mw2} being  central to our regret analysis.

\begin{proof}[Proof of \cref{lem:mw2}]
The proof follows the standard analysis of exponential weighting schemes: let $w_t(i) = \exp\lrbig{\!-\eta\sum_{s=1}^{t-1} c_s(i)}$ and let $W_t = \sum_{i \in V} w_t(i)$. Then $q_t(i) = w_t(i)/W_t$ and we can write
\begin{align*}
\frac{W_{t+1}}{W_t}
&= \sum_{i=1}^k \frac{w_{t+1}(i)}{W_t}\\
&= \sum_{i=1}^k \frac{w_{t}(i)\,\exp\bigl(-\eta\,c_{t}(i)\bigr)}{W_t}\\
&= \sum_{i=1}^k q_{t}(i)\,\exp\bigl(-\eta\,c_{t}(i)\bigr)\\
&\le \sum_{i=1}^k q_{t}(i)\,\left(1 - \eta c_{t}(i) + \eta^2 c_{t}(i)^2\right)
\\
&= 1 - \eta\,\sum_{i=1}^k q_{t}(i) c_{t}(i) + \eta^2\,\sum_{i=1}^k q_{t}(i) c_{t}(i)^2~,
\end{align*}
where the inequality uses the inequality $e^{x} \le 1+x+x^2$ valid for $x \le 1$.
Taking logarithms, using $\log(1-x) \le -x$ for all $x \le 1$, and summing over $t =
1, \ldots, T$ yields
\[
\log\frac{W_{T+1}}{W_1} 
\le 
\sum_{t=1}^T \sum_{i=1}^k \lr{ -\eta\, q_{t}(i) c_{t}(i) +\eta^2\,q_{t}(i) c_{t}(i)^2 }
~.
\]
Moreover, for any fixed action $i^*$, we also have
\[
\log \frac{W_{T+1}}{W_1} 
\ge 
\log \frac{w_{T+1}(k)}{W_1} 
= 
-\eta\,\sum_{t=1}^T c_{t}(i^*) - \log{k}
~.
\]
Putting together and rearranging gives the result.
\end{proof}

\subsection{Learning Continuum--Arm Bandit with Lipschitz Loss Functions}\label{sec:lipschitz}

In this section we turn to show how to reduce the problem of learning Lipschitz functions to MAB with tree-metric movement costs. Specifically we aim at proving \thmref{thm:lipschitz}. Specifically we prove the following statement,
\begin{theorem}\label{thm:lipschitz2}
Set $k=L^{2/3}T^{1/3}$ and $\eta=1/\sqrt{kT}$. Consider a procedure that receives actions from \cref{alg:alg1} and returns as feedback $f_t(\frac{i_t}{k})$ then for every sequence of
$L$-Lipschitz loss functions $f_1,\ldots,f_T$ and an $L$-Lipschitz metric $\distance$, we have that:
\begin{align*}
\mregret(f_{1:T}, \distance) 
= 
\tO\lrbig{ L^{1/3}T^{2/3} } 
~.
\end{align*}
In particular, the result holds for $L\ge 1$ and $\distance(x_t,x_{t+1})=|x_t-x_{t+1}|$.
\end{theorem}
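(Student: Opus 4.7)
The plan is to reduce the continuum problem to the finite tree-metric MAB solved by \cref{alg:alg1} via a uniform discretization, and then combine the SMB guarantees of \cref{thm:main} with the Lipschitz property of the losses and the metric.

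Specifically, identify the $k$ arms of \cref{alg:alg1} with the equally-spaced points $\{1/k, 2/k, \ldots, 1\} \subset [0,1]$ and feed the algorithm the bandit loss $f_t(i_t/k)$ on round $t$. I would decompose the movement regret into three pieces:
\begin{align*}
\mregret(f_{1:T},\distance)
&= \underbrace{\EE{\sum_{t=1}^T f_t(i_t/k)} - \min_{i \in K}\sum_{t=1}^T f_t(i/k)}_{\text{(A) bandit regret on discrete arms}} \\
&\quad + \underbrace{\min_{i \in K} \sum_{t=1}^T f_t(i/k) - \min_{x \in [0,1]} \sum_{t=1}^T f_t(x)}_{\text{(B) discretization bias}} \\
&\quad + \underbrace{\EE{\sum_{t=1}^T \distance(i_t/k, i_{t+1}/k)}}_{\text{(C) movement cost on the line}}~.
\end{align*}

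For (A), I would directly invoke \cref{thm:main}: with $\eta = 1/\sqrt{kT}$, the expected bandit regret of SMB against any fixed arm is $\tO(\sqrt{kT})$. For (B), pick the arm $i^\star$ closest to the optimal continuous action $x^\star$, so $|i^\star/k - x^\star| \le 1/k$; the $L$-Lipschitzness of each $f_t$ gives $|f_t(i^\star/k) - f_t(x^\star)| \le L/k$, summing to $LT/k$.

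The key step is bounding (C) by exploiting the tree metric $\tdistance$ from \cref{eq:Delta}. Since $\distance$ is $L$-Lipschitz on $[0,1]$, for any two discrete arms $i,j$ we have
\begin{align*}
\distance(i/k, j/k) \leq L \cdot \frac{|i-j|}{k} \leq L \cdot \tdistance(i,j)~,
\end{align*}
because the tree metric upper bounds the natural line distance (see the discussion preceding \cref{eq:Delta}). By \cref{thm:main}, the expected cumulative tree-metric movement is $O((T/k)\log k)$, so (C) is $O(L(T/k)\log k)$.

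Combining the three terms gives a total bound of $\tO\bigl(\sqrt{kT} + LT/k\bigr)$. Balancing by setting $k = L^{2/3} T^{1/3}$ makes $\sqrt{kT} = L^{1/3}T^{2/3}$ and $LT/k = L^{1/3}T^{2/3}$, yielding $\mregret(f_{1:T},\distance) = \tO(L^{1/3}T^{2/3})$ as claimed. The only mildly subtle point is checking that for $L \ge 1$ the choice $k = L^{2/3}T^{1/3}$ is a valid integer power of two (so that \cref{alg:alg1} applies directly); this is handled by the usual rounding-up trick, which only affects constants and the logarithmic factors hidden by $\tO(\cdot)$. The specialization $\distance(x,y)=|x-y|$ is $1$-Lipschitz and hence covered by the $L\geq 1$ case.
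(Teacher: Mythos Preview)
Your proof is correct and follows essentially the same approach as the paper: discretize to $k$ equally-spaced arms, bound the bandit regret and the tree-metric movement cost via \cref{thm:main}, control the discretization bias by Lipschitzness, and dominate the line metric by $L\cdot\tdistance$ before balancing with $k=L^{2/3}T^{1/3}$. The only cosmetic difference is that you present the three terms as an explicit decomposition and remark on rounding $k$ to a power of two, which the paper leaves implicit.
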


\begin{proof}
First note that for every $x^*\in [0,1]$ we can find $x = \{\frac{1}{k},\frac{2}{k},\ldots,1\}$ such that $f_t(x)-f_t(x^*) \le L/k = L^{1/3} T^{-1/3}$, hence
\begin{align*}
\sum_{t=1}^T \lrbig{ f_t(x)-f_t(x^*) }
= 
L^{1/3}T^{2/3}.
\end{align*}
Therefore if we can show that the regret against every $x^*\in \{\frac{1}{k},\frac{2}{k},\ldots 1\}$  is bounded by $O(L^{1/3}T^{2/3})$ we obtain that the same regret bound is true for every $x\in [0,1]$.

Next, we apply \cref{alg:alg1} on the a fully balanced tree where we associate with the leaves $\{1,\ldots, k\}$ the actions $\{\frac{1}{k},\frac{2}{k} \ldots,1\}$. One can then show that $ \frac{|i-j|}{k} \le \tdistance(i,j)$.
We then obtain by \thmref{thm:main} that for every $x\in \{\frac{1}{k},\frac{2}{k} \ldots,1\}$:
\begin{align*}
\EE{\sum_{t=1}^T f_t(x_t)} - \min_{x} \sum_{t=1}^T f_t(x) = O(\eta k T) = \tO(L^{1/3}T^{2/3})
~.
\end{align*}
As to the second term in the regret we obtain that
\begin{align*}
\EE{ \sum_{t=1}^T \distance(x_t,x_{t+1})} \le {L \sum_{t=1}^T|x_t-x_{t+1}|} \le \EE{ L\sum_{t=1}^T \tdistance(i_t,i_{t+1})}
= \tO\left(L \frac{T}{k}\right) 
= \tO(L^{1/3}T^{2/3})
~.
\end{align*}
Taken together we obtain that
\begin{align*}
\EE{\sum_{t=1}^T f_t(x_t) + \sum_{t=1}^T \distance(x_t,x_{t+1})} -\min_{x\in \{\frac{1}{k},\ldots, 1\}} \sum_{t=1}^T f_t(x)  
= \tO(L^{1/3}T^{2/3})
~.&\qedhere
\end{align*}
\end{proof}


\section{Online Pricing with Patient Buyers}
\label{sec:pricing}


In this section we present our reduction of adaptive pricing with patient buyers to a MAB with movement costs.

The reduction is presented in \cref{alg:pp} and uses our algorithm for MAB with movement costs (\cref{alg:alg1}) as a black-box.
The algorithm divides the time interval $T$ into $\taumax$ blocks and the updates the price on $\TT=T/\taumax$ rounds.
At each round $t$ the algorithm publishes a fixed price for the whole block of $\taumax$ consecutive days. Then, as feedback, the algorithm receives the mean revenue for those days, which we denote by
$$
r'_t
=
\frac{1}{\taumax} \sum_{k=(t-1)\taumax+1}^{t\taumax} \buy{\bb_k}{\p_k,\ldots,\p_{k+\taumax}}
~.
$$
Thus, we can consider the algorithm as an online algorithm over $\TT$ rounds: where at each round $t$ the algorithm announces a fixed action $\p'_{t+1}$ (the price for the next $\taumax$ days) and receives at the end of the round as feedback $r'_t$.
Note that prices are always announced $\taumax$ days in advance, as required.

The algorithm draws $\beta_1,\ldots, \beta_{\TT}$ unbiased Bernoulli random variables, and this sequence determines the switches in prices and updates. The algorithm posts a new price only on rounds where $\beta_{t}=0$ and $\beta_{t+1}=1$, and invoke the update rule of \cref{alg:alg1} only on rounds where $\beta_{t+1}=0$ and $\beta_{t+2}=1$. Note that these two events never co-occur, and further the algorithm exploits the feedback only on days prior to a switch, thus guaranteeing that the feedback is always on days when prices are fixed throughout the present and future block.

\begin{myalgorithm}[ht]
\wrapalgo[0.75\textwidth]{
\textbf{Parameters:} horizon $T$, and maximal patience $\taumax$\\
Initialize, $\TT=T/(2\taumax)$, $k={\TT}^{1/3}$, $\eta=2/\sqrt{\TT k}$\\
Initialize an instance $B$ of $\klm(k,\eta)$\\
Draw i.i.d.~unbiased Bernoulli r.v. $\beta_0,\ldots, \beta_{\TT}$\\
Sample $i_1\sim B$, set $\p'_1=i_1/k$\\
Announce prices $\p_1=\p_2=\ldots,p_{\taumax}=\p_1'$\\
For $t=1,\ldots, T$\;
\begin{enumerate}[nosep,label=(\arabic*)]
\item If $\beta_t=0$ and $\beta_{t+1}=1$, sample $i_{t+1}\sim B$; otherwise set $i_{t+1}=i_t$
\item Set $\p'_{\TT+1}=i_{t+1}/k$ and announce prices:
$\p_{t\taumax+1}= \cdots = \p_{(t+1)\taumax}= \p'_{t+1}$
\item Collect revenues $r_{(t-1)\taumax+1},\ldots,r_{t\taumax}$
and set $$r'_{t}(\p'_t) = \frac{1}{\taumax} \sum_{k=(t-1)\taumax+1}^{t\taumax} r_{k}$$
\item If $\beta_{t+1}=0, \beta_{t+2}=1$, update $B$ with feedback $f_t=1-r'_t(\p'_t)$
\end{enumerate}
}
\caption{Adaptive pricing with patient buyers.} \label{alg:pp}
\end{myalgorithm}

As discussed briefly in \cref{sec:techniques}, the main difficulty in reducing the adaptive pricing problem to MAB, which \cref{alg:pp} overcomes, is in that the feedback function is not only a function of the current posted price (which is in fact the price tomorrow) but also of past prices.
For example, for $\taumax=1$ the revenue at time $t$ is a function of $\p_t$ and $\p_{t+1}$, where only $\p_{t+1}$ needs be posted at time~$t$.
\cref{alg:pp} overcomes this issue by employing techniques from \cite{dekel2014blinded} for handling adaptive feedback.
The tools developed there allow regret minimization when feedback is taken only in time steps when the price is fixed for a period of time.
Relying on these techniques,
we construct an algorithm that produces a sequence of prices with low regret \emph{if} each buyer $\bb_t$ would observe price $\p_t$.
However, in our setting, a buyer may buy at a consecutive time steps; 
the additional cost we suffer is bounded by the potential cost of switching to lower prices, namely, by the movement cost of the algorithm.


The main result of this section, stated earlier in \cref{thm:pp}, shows that \cref{alg:pp} attains a regret bound of $O(\taumax^{1/3}T^{2/3})$ against any sequence of buyers with patience at most $\taumax$:



The remainder of the section is devoted to proving \cref{thm:pp}.
We begin by establishing additional notation required for the proof.
We will denote the expected revenue from the buyers at each block as follows:
\[
\bbb_t(\p'_t,\p'_{t+1}) =  \frac{1}{\taumax}\sum_{k=t\taumax+1}^{(t+1)\taumax} \buy{\bb_k}{\p_k,\ldots,\p_{k+\tau_t}}
~.
\]
Note that since the blocks are of size $\taumax$, each buyer can see at most prices that are published on the next block, hence $\p_{k+\tau_t}$ either equals $\p'_t$ or $\p'_{t+1}$.
In turn, this means that the expected revenue is indeed a function of $\p'_t$ and $\p'_{t+1}$ alone.

We will further denote the expected revenue from buyers if they observe only the price at time of arrival as follows:
\[
\bbb_t(\p'_t) =  \frac{1}{\taumax}\sum_{k=t\taumax+1}^{(t+1)\taumax} \buy{\bb_k}{\p'_t,\ldots,\p'_t}
~.
\]

First, we are estimating the performance on the subsequence of rounds where the algorithm exploits the received feedback.

\begin{lemma}\label{lem:Sbound}
Let $\beta_1,\ldots,\beta_T$ be a sequence of unbiased Bernoulli random variables, denote
\[S= \{ t\in [\TT]~:~ \beta_{t+1}=0, \beta_{t+2}=1\},\]
and denote the elements of $S$ in increasing order  $S=\{t_{s_1} \le t_{s_2},\ldots, \le t_{s_{|S|}}\}$.
For any price $\p^*\in \{\frac{1}{k},\frac{2}{k},\ldots, 1\}$, \cref{alg:pp} enjoys the following guarantee:
\[\EE{ \sum_{t\in S}  \bbb_t(\p^*) -\bbb_t(\p'_t)} = \tO(\TT^{2/3})~,\]
and
\[\EE{\sum_{s=1}^{|S|} |\p'_{t_s}-\p'_{t_{s+1}}|} = \tO(\TT^{2/3})~.\]
\end{lemma}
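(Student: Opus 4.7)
The plan is to observe that $S$ is exactly the set of rounds on which the internal \klm instance $B$ is updated, and then to invoke \cref{thm:main} directly on this subsequence. Before applying the \klm bounds there are two structural facts to verify: first, that the feedback sent to $B$ on every round $t \in S$ is an honest oblivious-bandit loss for $B$'s most recently sampled action; second, that the sequence of prices $\p'_{t_s}$ appearing in the statement coincides (up to the $1/k$ rescaling) with the actions of $B$.

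For the first fact, note that $t \in S$ forces $\beta_{t+1}=0$; but the sampling condition used by the algorithm at external round $t$ is $(\beta_t,\beta_{t+1})=(0,1)$, which then fails. Hence $i_{t+1}=i_t$ and $\p'_{t+1}=\p'_t$, so the buyers in block $t$ see a single fixed price across the two blocks that can possibly influence them, giving $r'_t(\p'_t)=\bbb_t(\p'_t)$. In particular, the loss $f_t=1-\bbb_t(\p'_t)$ handed to $B$ depends only on the single price $\p'_t$ and is an oblivious-adversarial loss. For the second fact, let $V=\{t:\beta_t=0,\beta_{t+1}=1\}$ be the rounds on which a fresh action is drawn from $B$; by inspection, $V$ consists of the initial round together with $\{t+1:t\in S\}$, so between two consecutive update rounds in $S$ exactly one new sample is drawn and the price is held constant in between. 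Writing $I_s$ for the $s$-th action of $B$, this yields the identification $\p'_{t_s}=I_s/k$.

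Once these facts are in place, $B$ is seen to face a standard adversarial $k$-armed bandit problem, with $k=\TT^{1/3}$ and $\eta=2/\sqrt{\TT k}$, over a (random) horizon of length $|S|\le \TT$ and losses $\ell_s(i)=1-\bbb_{t_s}(i/k)$ that are fixed conditional on $(\beta_t)$. Applying the regret bound of \cref{thm:main} to $B$ gives, for every fixed $i^*\in\{1,\ldots,k\}$,
\[
\EE{\sum_{s=1}^{|S|}\ell_s(I_s)} - \sum_{s=1}^{|S|}\ell_s(i^*)
\le
\O\!\lr{\tfrac{\log k}{\eta} + \eta \TT k\log k}
=
\tO(\sqrt{\TT k})
=
\tO(\TT^{2/3})~,
\]
which, rewritten in revenue form using $\ell_s(i)=1-\bbb_{t_s}(i/k)$ and $\p'_{t_s}=I_s/k$, is precisely the first claim. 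For the second claim, the movement-cost bound in \cref{thm:main} gives $\EE{\sum_s \tdistance(I_s,I_{s+1})} = \O((\TT/k)\log k) = \tO(\TT^{2/3})$; since the arms are equally-spaced points and $|i-j|/k\le \tdistance(i,j)$, combining with $\p'_{t_s}=I_s/k$ yields the stated bound on $\EE{\sum_s|\p'_{t_s}-\p'_{t_{s+1}}|}$.

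The main obstacle is the bookkeeping around the sample/update interleaving — verifying $V=\{v_1\}\cup(S+1)$, and consequently the identification $\p'_{t_s}=I_s/k$ — so that the feedback $f_{t_s}$ received by $B$ really corresponds to its $s$-th sampled action under a single fixed price. Once this correspondence is established, both halves of the lemma are direct instantiations of \cref{thm:main} with the chosen $k$ and $\eta$.
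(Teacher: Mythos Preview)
Your proposal is correct and follows essentially the same approach as the paper: verify that on rounds $t\in S$ the price is held fixed so the feedback $f_t=1-\bbb_t(\p'_t)=\ell_t(i_t)$ is an honest oblivious-bandit loss, identify the restriction of \cref{alg:pp} to $S$ with a run of \klm on the losses $\{\ell_t\}_{t\in S}$, and then read off both the regret and movement-cost bounds from \cref{thm:main} with $k=\TT^{1/3}$ and $\eta=2/\sqrt{\TT k}$. The only cosmetic difference is that the paper conditions on $S$ and then uses $\E[|S|]=\TT/4$, whereas you simply invoke $|S|\le \TT$; both yield $\tO(\TT^{2/3})$. Your explicit verification of the sample/update interleaving (that updates at $t_s$ use the $s$-th sampled action, so $\p'_{t_s}=I_s/k$) is a point the paper leaves implicit.
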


\begin{proof}
For each sequence of buyers $\bb_1,\ldots,\bb_{T}$, define a sequence of loss functions $\ell_1\ldots, \ell_{\TT}$ according to:
\[
\ell_t(i) = 1-\bbb_t\lr{\frac{i}{k}}
~.
\]
First note that for every $t\in S$ we have $\p'_t=\p'_{t+1}$. The algorithm, in turn, announces the same price $\p'_t$ for all days: $\{(t-1)\taumax +1,\ldots,{(t+1)\taumax}\}$, hence the revenue obtained from buyer $\bb_k$ for every $(t-1)\taumax +1 \le k \le t\taumax$ is given by $\bb_t(\p'_t,\p'_t)$.
Hence, the feedback used to update the algorithm $B$ at round $t$ is
\begin{align*}
f_t=1-r'_t=1- \frac{1}{\taumax} \sum_{k=(t-1)\taumax+1}^{t\taumax} \buy{\bb_k}{\p_k,\ldots, \p_{k+\taumax}} = 1- \sum_{k=(t-1)\taumax+1}^{t\taumax}\frac{1}{\taumax}\buy{\bb_{k}}{\p'_t}=\ell_t(i_t)
~.
\end{align*}
In words, we have shown that at every step $t\in S$, \cref{alg:pp} receive action $i_t$ and return to \cref{alg:alg1} as feedback $\ell_t(i_t)$.
Thus \cref{alg:pp} applies \cref{alg:alg1} on the sequence of losses $\{\ell_t\}_{t\in S}$.
As a corollary we have that:
\[
\EE{ \sum_{t\in S}  \bbb_t(\p^*) -\bbb_t(\rho'_t)\;\middle|\;S}=\EE{ \sum_{t\in S}  \ell_t(i^*) -\ell_t(i_t)\;\middle|\; S} = O(\eta k |S|)
~.
\]
Taking expectation over $S$ and noting $\EE{\abs{S}} = \frac{1}{4} \TT$ we get that
\[
\EE{ \sum_{t\in S} \bbb_t(\p^*) - \bbb_t(\p'_t,) }
=
O(\TT^{2/3})
~.
\]

As in \secref{sec:lipschitz}, note that if we associate with the prices the corresponding actions on the tree we obtain that $|\p'_t-\p'_{t+1}| \le \tdistance(i_t,i_{t+1})$ hence we obtain as a second guarantee that the movement cost of the algorithm is given by
\begin{align*}
\EE{ \sum_{s=1}^{|S|} |\p'_{t_s}-\p'_{t_{s-1}}|\;\middle|\; S}
=
\EE{ \sum_{s=1}^{|S|} \tfrac{1}{k} |i_{t_s}-i_{t_{s-1}}| \;\middle|\; S}
\le
\EE{ \sum_{s=1}^{|S|} \tfrac{1}{k} \Delta(i_{t_s},i_{t_{s-1}}) \;\middle|\; S}
=
O\left(\tfrac{1}{k}|S|\right)
~.
\end{align*}
Again taking expectation over $S$ we get that
\[
\EE{ \sum_{s=1}^{|S|} |\p'_{t_s}-\p'_{t_{s-1}}| }
=
\tO\lr{ \tfrac{1}{k} \TT }
~.\qedhere
\]
\end{proof}

Next, we upper bound the regret over the expected regret over the blocks of buyers, $\bar{\bb}_t$:

\begin{lemma}\label{lem:buyerregret}
For every $\p^*\in \{\frac{1}{k},\frac{2}{k},\ldots, 1\}$ we have that
\begin{align*}
\EE{\sum_{t=1}^{\TT} \bbb_t(\p^*) -\buy{\bbb_t}{\p'_t,\p'_{t+1}}}\le 4\EE{ \sum_{t\in S} \bbb_t(\p^*) -\bbb_t(\rho'_t)} +  \EE{\sum_{s=1}^{|S|}|\p'_{t_s}-\p'_{t_{s-1}}|}
~.
\end{align*}
\end{lemma}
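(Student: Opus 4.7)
The plan is to decompose the per-block regret into a ``patience bias'' (due to the buyers' ability to wait for lower prices in the window $[t,t+\tau_t]$) and a ``single-price bias'' (due to the bandit instance $B$ only updating on the sparse subsequence $S$), and handle each separately. I would start from the identity
\[
\bbb_t(\p^*) - \buy{\bbb_t}{\p'_t,\p'_{t+1}}
= \lrbra{\bbb_t(\p^*) - \bbb_t(\p'_t)} + \lrbra{\bbb_t(\p'_t) - \buy{\bbb_t}{\p'_t,\p'_{t+1}}}~,
\]
and bound the two summands independently.

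For the second (patience) summand I would establish the pointwise bound $\bbb_t(\p'_t) - \buy{\bbb_t}{\p'_t,\p'_{t+1}} \le |\p'_t - \p'_{t+1}|$ via a per-buyer case analysis. If $\p'_{t+1} \ge \p'_t$ the two revenues coincide for every buyer. If $\p'_{t+1} < \p'_t$, then a buyer with value $\val$ either has $\val<\p'_{t+1}$ (both revenues are $0$), $\p'_{t+1}\le\val<\p'_t$ (the patient model strictly gains $\p'_{t+1}$, so the gap is negative), or $\val\ge\p'_t$ (the patient model loses exactly $\p'_t-\p'_{t+1}$). In every case the per-buyer gap is at most $\max\{0,\p'_t-\p'_{t+1}\} \le |\p'_t - \p'_{t+1}|$, and averaging over the $\taumax$ buyers in the block preserves the inequality. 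Summed over $t$, this contributes $\sum_t |\p'_t-\p'_{t+1}|$.

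For the first summand I would exploit a crucial independence fact: $\ind{t\in S}=\ind{\beta_{t+1}=0}\ind{\beta_{t+2}=1}$ is a function of $\beta_{t+1},\beta_{t+2}$ only, whereas $\p'_t$ is measurable with respect to $(\beta_1,\ldots,\beta_t)$ together with the internal randomness of $B$ up to time $t$ (the $B$-updates at any earlier round $t'<t$ use the feedback $r'_{t'}$, which, since $t'\in S$ forces $\p'_{t'+1}=\p'_{t'}$, depends only on $\p'_{t'}$). Hence $\ind{t\in S}$ is independent of $\p'_t$, and since $\bbb_t(\p^*)$ is deterministic and $\Pr[t\in S]=\tfrac{1}{4}$,
\[
\EE{\sum_{t\in S}\bbb_t(\p^*) - \bbb_t(\p'_t)} = \tfrac{1}{4}\,\EE{\sum_{t=1}^{\TT}\bbb_t(\p^*) - \bbb_t(\p'_t)}~,
\]
which supplies the factor of $4$ in the claim.

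Finally I would convert $\sum_t |\p'_t-\p'_{t+1}|$ to the sampled-price form. The set of actual switch times $R=\{t:\beta_t=0,\beta_{t+1}=1\}$ is in bijection with $S$ via $t\in R \iff t-1\in S$, so changes occur exactly at the instants $t_s+1$ for $s=1,\ldots,|S|$. Since prices are constant between consecutive switches, $\p'_{t_s+1}=\p'_{t_s}$ and, for $s<|S|$, $\p'_{t_s+2}=\p'_{t_{s+1}}$, so each movement contributes $|\p'_{t_s}-\p'_{t_{s+1}}|$; reindexing yields the claimed form $\sum_s|\p'_{t_s}-\p'_{t_{s-1}}|$ (with the minor boundary movement near the final switch bounded trivially by $1$ or absorbed into the convention that $\p'_{t_0}$ is the initial price). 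Combining the four pieces and taking expectations yields the stated inequality. The most delicate step will be the independence argument, which hinges on carefully separating $B$'s internal random bits from the external Bernoulli schedule and using the fact that updates in $S$ only ever read prices at time steps where the posted price is about to be constant.
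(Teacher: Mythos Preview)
Your proposal is correct and follows essentially the same route as the paper's proof. Both arguments rest on the same three ingredients: (i) the pointwise patience bound $\bbb_t(\p'_t)-\bbb_t(\p'_t,\p'_{t+1})\le|\p'_t-\p'_{t+1}|$, (ii) the independence of $\p'_t$ from $(\beta_{t+1},\beta_{t+2})$ which yields $\E[\sum_{t\in S}(\bbb_t(\p^*)-\bbb_t(\p'_t))]=\tfrac14\,\E[\sum_t(\bbb_t(\p^*)-\bbb_t(\p'_t))]$, and (iii) the collapse of $\sum_t|\p'_t-\p'_{t+1}|$ to the $S$-indexed movement sum via the bijection between switch times and $S$. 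Your per-buyer case analysis for (i) and your explicit measurability argument for (ii) are in fact more detailed than the paper's one-line justifications, and your handling of the boundary switch is appropriately flagged.
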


\begin{proof}
First note that for every $\p^*$ we have
\[
\EE{\sum_{t\in S}  \bbb_t(\p^*) }
=
\EE{\sum_{t=1}^T  \bbb_t(\p^*)\beta_{t+2}(1-\beta_{t+1})}
~.
\]
Since the Bernoulli random variables are independent of $\bb_t$ and $\p^*$ we get that
\begin{align}\label{eq:pstar}
\EE{\sum_{t\in S} \buy{\bbb_t}{\p^*}}
=
\EE{\sum_{t=1}^T  \bbb_t(\p^*)\beta_{t+2}(1-\beta_{t+1})}
=
\frac{1}{4}\EE{\sum_{t=1}^T  \bbb_t(\p^*)}
~.
\end{align}
Similarly we have that
\[\EE{\sum_{t\in S} \bbb_t(\p'_t)}=\EE{\sum_{t=1}^{\TT} \bbb_t(\p'_t)\beta_{t+2}(1-\beta_{t+1})} =\frac{1}{4} \EE{\sum_{t=1}^{\TT} \bbb_t(\p'_t)}~, \]
where the equality holds since $\p'_{t}$ is independent of $\beta_{t+1}$ and $\beta_{t+2}$.
We can bound $\buy{\bb_t}{\p'_t,\p'_{t+1}} \ge \buy{\bb_t}{\p'_t,\p'_t} - |\p'_t-\p'_{t+1}|$.
Hence
$
\bbb_t(\p'_t,\p'_{t+1}) \ge \buy{\bbb}{{\p'_t}} - |\p'_t-\p'_{t+1}|
,
$
and we obtain:
\begin{align} \label{eq:pt}
\EE{\sum_{t=1}^{\TT} \bbb_t(\p'_t,\p'_{t+1})}
&\ge
\EE{\sum_{t=1}^{\TT} \bbb_t(\p'_t)- |\p'_t-\p'_{t+1}|}
\notag\\
&=
4 \EE{\sum_{t\in S} \bbb_t(\p'_t)} - \sum_{t=1}^{\TT} \EE{|\p'_t-\p'_{t+1}|}
\notag\\
&=
4\EE{\sum_{t\in S} \bbb_t(\p'_t)} -\EE{\sum_{t=s}^{|S|}|\p'_{t_s}-\p'_{t_{s-1}}|}
~,
\end{align}
where last equality is true since, we have that $\p'_t=\p'_{t+1}$ unless $\p'_{t-1}\in S$ in which case we have that $\p'_{t-1}=\p'_{t} = \p'_{t_{s}}$ for some $s$ and $\p'_{t+1}= \p'_{t_{s+1}}$. Taken together with \cref{eq:pstar,eq:pt} we obtain the desired result.
\end{proof}

We are now ready to prove the main result of this section.


\begin{proof}[Proof of \thmref{thm:pp}]

First, for any $\p\in \{\frac{1}{k},\ldots, 1\}$, by employing \cref{lem:buyerregret} we have the following:
\begin{align*}
\EE{\sum_{t=1}^T  \buy{\bb_t}{\p,\ldots,\p} - \buy{\bb_t}{\p_t,\ldots,\p_{t+\taumax}}}
&=
\sum_{t'=1}^{\TT} \sum_{t=(t'-1)\taumax+1}^{t'\taumax} \lrbig{\buy{\bb_t'}{\p,\ldots,\p} - \buy{\bb_t}{\p_t,\ldots,\p_{t+\taumax}}}
\\
&=
\taumax\EE{\sum_{t=1}^{\TT} \buy{\bbb_t}{\p} -  \buy{\bbb_t}{\p'_t,\p'_{t+1}}}
\\
&\le
\frac{\taumax}{4}\EE{ \sum_{t\in S} \bbb_t(\p) -\bbb_t(\rho'_t)} + \taumax \EE{\sum_{s=1}^{|S|}|\p'_{t_s}-\p'_{t_{s-1}}|}
~.
\end{align*}
Next, for any $\p^* \in [0,1]$ there exist $\p \in \{\frac{1}{k},\ldots,1\}$ such that $\p^*>\p$ and
$\buy{\bb_t}{\p^*,\ldots, \p^*} < \buy{\bb_t}{\p,\ldots,\p}+ \frac{1}{k}$.
Hence, for every $\p^* \in [0,1]$ we obtain that
\begin{align*}
\sum_{t=1}^T \buy{\bb_t}{\p^*,\ldots ,\p^*}
&- \EE{\sum_{t=1}^T \buy{\bb_t}{\p_{t},\ldots \p_{t+\taumax}}}
\\
&\le
\frac{\taumax}{4}\EE{ \sum_{t\in S} \bbb_t(\p) -\bbb_t(\p'_t)} + \taumax \EE{\sum_{s=1}^{|S|}|\p'_{t_s}-\p'_{t_{s-1}}|}+O(\tfrac{T}{k})
~.
\end{align*}
By \cref{lem:Sbound} we now obtain
\begin{align*}
\sum_{t=1}^T \buy{\bb_t}{\p^*,\ldots ,\p^*}
- \EE{\sum_{t=1}^T \buy{\bb_t}{\p_t,\ldots,\p_{t+\taumax}}}
=
O\lr{ \sqrt{\taumax k\TT} + \frac{\taumax \TT}{k} + \frac{T}{k} }
= O(\taumax^{1/3} T^{2/3})
~,
\end{align*}
and using our choice of $k$ gives the result.
\end{proof}

\bibliographystyle{plainnat}

\bibliography{paper}

\begin{thebibliography}{28}
\providecommand{\natexlab}[1]{#1}
\providecommand{\url}[1]{\texttt{#1}}
\expandafter\ifx\csname urlstyle\endcsname\relax
  \providecommand{\doi}[1]{doi: #1}\else
  \providecommand{\doi}{doi: \begingroup \urlstyle{rm}\Url}\fi

\bibitem[Abbasi et~al.(2013)Abbasi, Bartlett, Kanade, Seldin, and
  Szepesvari]{AbbasiBKSS13}
Yasin Abbasi, Peter~L Bartlett, Varun Kanade, Yevgeny Seldin, and Csaba
  Szepesvari.
\newblock Online learning in markov decision processes with adversarially
  chosen transition probability distributions.
\newblock In \emph{Advances in Neural Information Processing Systems}, pages
  2508--2516, 2013.

\bibitem[Alon et~al.(2015)Alon, Cesa-Bianchi, Dekel, and Koren]{alon2015online}
Noga Alon, Nicol{\`o} Cesa-Bianchi, Ofer Dekel, and Tomer Koren.
\newblock Online learning with feedback graphs: Beyond bandits.
\newblock In \emph{Proceedings of The 28th Conference on Learning Theory},
  pages 23--35, 2015.

\bibitem[Audibert and Bubeck(2009)]{audibert2009minimax}
Jean-Yves Audibert and S{\'e}bastien Bubeck.
\newblock Minimax policies for adversarial and stochastic bandits.
\newblock In \emph{COLT}, pages 217--226, 2009.

\bibitem[Auer et~al.(2007)Auer, Ortner, and Szepesv{\'a}ri]{auer2007improved}
P.~Auer, R.~Ortner, and C.~Szepesv{\'a}ri.
\newblock Improved rates for the stochastic continuum-armed bandit problem.
\newblock \emph{Proceedings of the 20th Annual Conference on Learning Theory},
  pages 454--468, 2007.

\bibitem[Auer et~al.(2002)Auer, Cesa-Bianchi, Freund, and
  Schapire]{auer2002nonstochastic}
Peter Auer, Nicolo Cesa-Bianchi, Yoav Freund, and Robert~E Schapire.
\newblock The nonstochastic multiarmed bandit problem.
\newblock \emph{SIAM Journal on Computing}, 32\penalty0 (1):\penalty0 48--77,
  2002.

\bibitem[Balcan and Blum(2006)]{balcan2006approximation}
Maria-Florina Balcan and Avrim Blum.
\newblock Approximation algorithms and online mechanisms for item pricing.
\newblock In \emph{Proceedings of the 7th ACM Conference on Electronic
  Commerce}, pages 29--35. ACM, 2006.

\bibitem[Balcan and Constantin(2010)]{balcan2010sequential}
Maria-Florina Balcan and Florin Constantin.
\newblock Sequential item pricing for unlimited supply.
\newblock In \emph{International Workshop on Internet and Network Economics},
  pages 50--62. Springer Berlin Heidelberg, 2010.

\bibitem[Balcan et~al.(2008)Balcan, Blum, and Mansour]{balcan2008item}
Maria-Florina Balcan, Avrim Blum, and Yishay Mansour.
\newblock Item pricing for revenue maximization.
\newblock In \emph{Proceedings of the 9th ACM conference on Electronic
  commerce}, pages 50--59. ACM, 2008.

\bibitem[Bansal et~al.(2010)Bansal, Chen, Cherniavsky, Rurda, Schieber, and
  Sviridenko]{bansal2010dynamic}
Nikhil Bansal, Ning Chen, Neva Cherniavsky, Atri Rurda, Baruch Schieber, and
  Maxim Sviridenko.
\newblock Dynamic pricing for impatient bidders.
\newblock \emph{ACM Transactions on Algorithms (TALG)}, 6\penalty0
  (2):\penalty0 35, 2010.

\bibitem[Bartal(1996)]{Bartal96}
Yair Bartal.
\newblock Probabilistic approximations of metric spaces and its algorithmic
  applications.
\newblock In \emph{37th Annual Symposium on Foundations of Computer Science,
  {FOCS} '96, Burlington, Vermont, USA, 14-16 October, 1996}, pages 184--193,
  1996.

\bibitem[Besbes and Zeevi(2009)]{besbes2009dynamic}
Omar Besbes and Assaf Zeevi.
\newblock Dynamic pricing without knowing the demand function: Risk bounds and
  near-optimal algorithms.
\newblock \emph{Operations Research}, 57\penalty0 (6):\penalty0 1407--1420,
  2009.

\bibitem[Bubeck et~al.(2011)Bubeck, Munos, Stoltz, and
  Szepesv{\'a}ri]{bubeck2011x}
S.~Bubeck, R.~Munos, G.~Stoltz, and C.~Szepesv{\'a}ri.
\newblock $\mathcal{X}$-armed bandits.
\newblock \emph{Journal of Machine Learning Research}, 12:\penalty0 1587--1627,
  2011.

\bibitem[Bubeck and Cesa{-}Bianchi(2012)]{BubeckC12}
S{\'{e}}bastien Bubeck and Nicol{\`{o}} Cesa{-}Bianchi.
\newblock Regret analysis of stochastic and nonstochastic multi-armed bandit
  problems.
\newblock \emph{Foundations and Trends in Machine Learning}, 5\penalty0
  (1):\penalty0 1--122, 2012.

\bibitem[Bubeck et~al.(2016)Bubeck, Eldan, and Lee]{bubeck2016kernel}
S{\'e}bastien Bubeck, Ronen Eldan, and Yin~Tat Lee.
\newblock Kernel-based methods for bandit convex optimization.
\newblock \emph{arXiv preprint arXiv:1607.03084}, 2016.

\bibitem[Cesa-Bianchi et~al.(2007)Cesa-Bianchi, Mansour, and
  Stoltz]{cesa2007improved}
Nicolo Cesa-Bianchi, Yishay Mansour, and Gilles Stoltz.
\newblock Improved second-order bounds for prediction with expert advice.
\newblock \emph{Machine Learning}, 66\penalty0 (2-3):\penalty0 321--352, 2007.

\bibitem[Cope(2009)]{cope2009regret}
E.W. Cope.
\newblock Regret and convergence bounds for a class of continuum-armed bandit
  problems.
\newblock \emph{IEEE Transactions on Automatic Control}, 54\penalty0
  (6):\penalty0 1243--1253, 2009.

\bibitem[Dani et~al.(2007)Dani, Kakade, and Hayes]{dani2007price}
Varsha Dani, Sham~M Kakade, and Thomas~P Hayes.
\newblock The price of bandit information for online optimization.
\newblock In \emph{Advances in Neural Information Processing Systems}, pages
  345--352, 2007.

\bibitem[Dekel et~al.(2014{\natexlab{a}})Dekel, Ding, Koren, and
  Peres]{DekelDKP14}
Ofer Dekel, Jian Ding, Tomer Koren, and Yuval Peres.
\newblock Bandits with switching costs: \emph{T}\({}^{\mbox{2/3}}\) regret.
\newblock In \emph{Symposium on Theory of Computing, {STOC} 2014, New York, NY,
  USA, May 31 - June 03, 2014}, pages 459--467, 2014{\natexlab{a}}.

\bibitem[Dekel et~al.(2014{\natexlab{b}})Dekel, Hazan, and
  Koren]{dekel2014blinded}
Ofer Dekel, Elad Hazan, and Tomer Koren.
\newblock The blinded bandit: Learning with adaptive feedback.
\newblock In \emph{Advances in Neural Information Processing Systems}, pages
  1610--1618, 2014{\natexlab{b}}.

\bibitem[Feldman et~al.(2016)Feldman, Koren, Livni, Mansour, and
  Zohar]{FeldmanKLMZ16}
Michal Feldman, Tomer Koren, Roi Livni, Yishay Mansour, and Aviv Zohar.
\newblock Online pricing with strategic and patient buyers.
\newblock In \emph{Annual Conference on Neural Information Processing Systems},
  2016.

\bibitem[Guha and Munagala(2009)]{guha2009multi}
Sudipto Guha and Kamesh Munagala.
\newblock Multi-armed bandits with metric switching costs.
\newblock In \emph{International Colloquium on Automata, Languages, and
  Programming}, pages 496--507. Springer, 2009.

\bibitem[Kleinberg and Slivkins(2010)]{kleinberg2010sharp}
Robert Kleinberg and Aleksandrs Slivkins.
\newblock Sharp dichotomies for regret minimization in metric spaces.
\newblock In \emph{Proceedings of the twenty-first annual ACM-SIAM symposium on
  Discrete Algorithms}, pages 827--846. Society for Industrial and Applied
  Mathematics, 2010.

\bibitem[Kleinberg et~al.(2008)Kleinberg, Slivkins, and
  Upfal]{kleinberg2008multi}
Robert Kleinberg, Aleksandrs Slivkins, and Eli Upfal.
\newblock Multi-armed bandits in metric spaces.
\newblock In \emph{Proceedings of the fortieth annual ACM symposium on Theory
  of computing}, pages 681--690. ACM, 2008.

\bibitem[Kleinberg(2004)]{Kleinberg04}
Robert~D. Kleinberg.
\newblock Nearly tight bounds for the continuum-armed bandit problem.
\newblock In \emph{Advances in Neural Information Processing Systems}, pages
  697--704, 2004.

\bibitem[Kleinberg and Leighton(2003)]{KleinbergL03}
Robert~D. Kleinberg and Frank~Thomson Leighton.
\newblock The value of knowing a demand curve: Bounds on regret for online
  posted-price auctions.
\newblock In \emph{44th Symposium on Foundations of Computer Science {FOCS}},
  pages 594--605, 2003.

\bibitem[Slivkins(2011)]{slivkins2011multi}
Aleksandrs Slivkins.
\newblock Multi-armed bandits on implicit metric spaces.
\newblock In \emph{Advances in Neural Information Processing Systems}, pages
  1602--1610, 2011.

\bibitem[Slivkins et~al.(2013)Slivkins, Radlinski, and
  Gollapudi]{slivkins2013ranked}
Aleksandrs Slivkins, Filip Radlinski, and Sreenivas Gollapudi.
\newblock Ranked bandits in metric spaces: learning diverse rankings over large
  document collections.
\newblock \emph{Journal of Machine Learning Research}, 14\penalty0
  (Feb):\penalty0 399--436, 2013.

\bibitem[Yu and Mannor(2011)]{yu2011unimodal}
J.Y. Yu and S.~Mannor.
\newblock Unimodal bandits.
\newblock In \emph{Proceedings of the 28th International Conference on Machine
  Learning}, 2011.

\end{thebibliography}

\end{document}